\PassOptionsToPackage{hyperfootnotes=false}{hyperref}
\documentclass[twoside,11pt]{article}

\usepackage[preprint]{jmlr2e}
\usepackage{amsmath}
\usepackage{amssymb}
\usepackage{bm}
\usepackage{booktabs}
\usepackage{lastpage}
\usepackage{mathtools}
\usepackage{microtype}
\usepackage{placeins}
\usepackage{tabularx}
\usepackage{tikz}
\usetikzlibrary{arrows.meta,positioning}

\hypersetup{
  hidelinks,
  pdftitle={Regularized Emphatic Temporal-Difference Learning: Stability under Constant Stepsizes},
  pdfauthor={Xingguo Chen et al.}
}

\newtheorem{assumption}{Assumption}
\newcommand{\E}{\mathbb{E}}
\newcommand{\Prob}{\mathbb{P}}

\newcommand{\trans}{\mathsf{T}}

\DeclareMathOperator{\Sym}{Sym}
\DeclareMathOperator{\diag}{diag}

\DeclareMathOperator{\spec}{spec}

\jmlrheading{}{2026}{1-\pageref{LastPage}}{}{}{}{
Xingguo Chen, Zhaohui Wu, Jinguo Ye, Chao Li, Shangdong Yang,
Guang Yang, Skylar Liang, and Wenhao Wang}
\ShortHeadings{Regularized Emphatic Temporal-Difference Learning}{Chen et al.}
\firstpageno{1}

\begin{document}

\title{Regularized Emphatic Temporal-Difference Learning: Stability under Constant Stepsizes}

\author{
\name Xingguo Chen\textsuperscript{1}\thanks{Corresponding author: chenxg@njupt.edu.cn.},
Zhaohui Wu\textsuperscript{1}, Jinguo Ye\textsuperscript{1}, Chao Li\textsuperscript{1}\\
\name Shangdong Yang\textsuperscript{1}\\
\addr \textsuperscript{1}Jiangsu Key Laboratory of Big Data Security \& Intelligent Processing,\\
Nanjing University of Posts \& Telecommunications, Nanjing 210023, China\\
\email chenxg@njupt.edu.cn
\AND
\name Guang Yang\textsuperscript{2}\\
\addr \textsuperscript{2}Nanjing University, 163 Xianlin Avenue, Nanjing 210023, China
\AND
\name Skylar Liang\textsuperscript{3}\\
\addr \textsuperscript{3}Microsoft Corporation, One Microsoft Way, Redmond, WA 98052, USA\\
\email liangtianyu@microsoft.com
\AND
\name Wenhao Wang\textsuperscript{4}\\
\addr \textsuperscript{4}College of Electronic Countermeasure, National University of Defense Technology,\\
Hefei 230037, China}

\maketitle

\begin{abstract}%
Emphatic temporal-difference learning (ETD) stabilizes the expected off-policy
TD update and changes its projection geometry, but neither property determines
constant-stepsize sampled dynamics. We construct an ergodic two-state
counterexample in which the ETD mean map contracts while the sampled product
has a positive top Lyapunov exponent. Regenerative-cycle analysis separates
this sign from the infinite variance of the follow-on trace. We introduce
regularized emphatic TD (RETD), a normalized first-order post-shock repair that
leaves the trace and importance ratios unchanged, stores the emphatic TD signal
in a leaky scalar state, and releases a delayed correction. RETD's raw
equilibrium is an affine shift of the ETD equilibrium; single- and
two-regularization readouts recover the ETD fixed point exactly. We prove
almost-sure convergence for harmonic diminishing stepsizes and a conditional
constant-stepsize moment-contraction result from a Markovian random-product
bound. RETD has certified negative exponents on the two-state construction and
one Baird point, whereas the positive Baird ETD sign remains numerical. Paired
10,000-run experiments validate both separations, fixed-point recovery, a
nonmonotone stability region, and task dependence. RETD changes post-shock
dynamics; it does not reduce the shared follow-on-trace variance.
\end{abstract}

\begin{keywords}
  off-policy learning, emphatic temporal-difference learning, constant stepsize,
  random matrix products, fixed-point recovery
\end{keywords}

\section{Introduction}
\label{sec:introduction}

Off-policy temporal-difference learning separates at least two questions:
whether the expected recursion is stable and whether its fixed point is a good
target-policy approximation.  Baird's counterexample showed that ordinary
off-policy TD can fail at the mean-update level
\citep{baird1995residual}; Gradient-TD methods instead stabilize projected
objectives \citep{sutton2008convergent,sutton2009fast}.  Oblique-projection
analyses further show that convergence need not imply a good approximation
\citep{scherrer2010should,kolter2011fixed}.

ETD addresses both issues through emphatic state weighting: its stationary
expected update is positive stable and its projected Bellman equation uses
emphatic weights \citep{sutton2016emphatic,hallak2016generalized}.  Diminishing-
stepsize convergence and constrained constant-step weak-convergence results
are known \citep{yu2015convergence,yu2016weak}, but these results do not settle
the unconstrained sampled recursion at a fixed nonzero stepsize.  A
finite-mean, infinite-variance follow-on trace identifies a shock source, not
the sign of the associated random-matrix product.

We study the gap in a fully specified two-state example: the target action has
behavior probability $0.3$, the ratios are $10/3$ and zero, $\gamma=0.9$,
the scalar features are $1$ and $2$, and $\alpha=0.1$.  The ETD mean multiplier
is $0.684$, while regenerative analysis proves a strictly positive scalar top
Lyapunov exponent.  We call this the \emph{mean-to-path stability gap}.

The same construction identifies the repair mechanism.  A long target-action
run creates a large emphatic shock, but the following zero-ratio ETD update is
the identity.  RETD adds one leaky scalar state, so later samples can release a
delayed correction even when the current ratio is zero; the trace and ratios
themselves remain unchanged.

Section~\ref{sec:retd} derives RETD from seven design constraints.  Positive
leakage removes the $c=0$ conserved mode, while the equilibrium analysis gives
single-$c$ and two-$c$ recovery of the ETD prediction, including singular and
ill-conditioned cases.  We then prove the diminishing-stepsize result and
instantiate the constant-stepsize random-product theorem of
\citet{durmus2021stability}; explicit counterexample points are handled by
regenerative or projective certificates.

Our contributions are threefold.  \emph{(1)} We give an exact two-state
mean-to-path separation: ETD is strictly mean stable but has a positive top
Lyapunov exponent, while RETD on the same task and stepsize has a certified
negative exponent.  \emph{(2)} We derive RETD as a one-scalar feedback
modification of the post-shock dynamics and characterize its equilibrium,
including single-$c$ and two-$c$ recovery of the ETD fixed point and the
associated singularity and conditioning conditions.  \emph{(3)} We prove
diminishing- and conditional constant-stepsize stochastic stability results,
give a rigorous RETD certificate and numerical ETD diagnosis on Baird's
counterexample, and evaluate mechanism, recovery, stability regions,
baselines, cost, and adverse tasks.

The paper follows this chain: Section~\ref{sec:preliminaries} defines the
stability notions and proves the ETD half of the two-state separation;
Section~\ref{sec:retd} derives RETD and its recovery identities; and
Section~\ref{sec:theory} gives the two stochastic theorems.  Section~\ref{sec:baird}
extends the mean/product analysis to Baird's system, Section~\ref{sec:experiments}
reports the paired evaluations and adverse tasks, and the final sections give
the literature position and limitations.

\section{Preliminaries and a Constant-Stepsize ETD Counterexample}
\label{sec:preliminaries}

We begin with the ordinary and emphatic projected systems and distinguish
fixed-point quality from three notions of dynamical stability.  A two-state
counterexample then shows why mean stability of ETD does not settle the
constant-stepsize sample-path question.

\subsection{Off-Policy Linear Prediction}

Consider a finite Markov decision process with state space $\mathcal S$,
action space $\mathcal A$, target policy $\pi$, behavior policy $\mu$, and a
constant discount $\gamma\in[0,1)$.  The behavior chain is irreducible and
aperiodic with stationary distribution $\bm d_\mu$, and the target policy is
covered by the behavior policy.  At time $t$, define
$\rho_t=\pi(A_t\mid S_t)/\mu(A_t\mid S_t)$ and
$v_{\bm\theta}(s)=\bm\phi(s)^\trans\bm\theta$, where the rows of $\bm\Phi$
are the feature vectors.  The TD error is
$\delta_t=R_{t+1}+\gamma\bm\phi(S_{t+1})^\trans\bm\theta_t
-\bm\phi(S_t)^\trans\bm\theta_t$.
Let $\bm P_\pi$ and $\bm r_\pi$ denote the target-policy transition matrix
and expected one-step reward.  Then
$\bm v_\pi=(\bm I-\gamma\bm P_\pi)^{-1}\bm r_\pi$.

Ordinary importance-sampled TD(0) has stationary mean system
\begin{equation*}
 \bm A_\mu=\bm\Phi^\trans\bm D_\mu
            (\bm I-\gamma\bm P_\pi)\bm\Phi,
 \qquad
 \bm b_\mu=\bm\Phi^\trans\bm D_\mu\bm r_\pi,
\end{equation*}
where $\bm D_\mu=\diag(\bm d_\mu)$.  Baird's counterexample shows that
$\bm A_\mu$ need not be positive stable, so the expected recursion itself can
diverge \citep{baird1995residual}.

\subsection{The Fixed Point Is a Separate Question}

Suppose an algorithm converges to
$\bm\theta_\mu=\bm A_\mu^{-1}\bm b_\mu$.  Its prediction is an oblique
projection whose quality depends on the sampling and projection geometry;
convergence therefore does not imply closeness to $\bm v_\pi$
\citep{scherrer2010should,kolter2011fixed}.

Reachability and quality are therefore separate questions.  Gradient-TD
methods stabilize projected objectives, but do not automatically replace
behavior weighting by the emphatic weighting relevant to ETD.

ETD(0), with unit interest, uses the follow-on trace
$F_{t+1}=1+\gamma\rho_tF_t$ and update
$\bm\theta_{t+1}=\bm\theta_t+
\alpha_tF_t\rho_t\delta_t\bm\phi(S_t)$.
Its expected emphatic weights and linear system are
\begin{align*}
 \bm f&=(\bm I-\gamma\bm P_\pi^\trans)^{-1}\bm d_\mu,
 &\bm F&=\diag(\bm f),
 \\
 \bm A&=\bm\Phi^\trans\bm F
        (\bm I-\gamma\bm P_\pi)\bm\Phi,
 &\bm b&=\bm\Phi^\trans\bm F\bm r_\pi.
\end{align*}
Under standard positive-interest and feature conditions, $\bm A$ is positive
stable and $\bm\theta_E=\bm A^{-1}\bm b$ solves the emphatically weighted
projected Bellman equation \citep{sutton2016emphatic,hallak2016generalized}.
Thus a path-stability modification must state how this fixed-point information
is preserved.

\subsection{Three Notions of Stability}
\label{sec:three-stabilities}

The phrase ``ETD is stable'' is incomplete unless the asymptotic regime and
random object are specified.  We distinguish three notions.

\emph{Mean Stability.}
With the trace process in stationarity and the parameter frozen, the expected
ETD drift is $\dot{\bm\theta}=\bm b-\bm A\bm\theta$.  The ODE is stable when
$-\bm A$ is Hurwitz; for constant $\alpha$,
$\rho(\bm I-\alpha\bm A)<1$ is the corresponding mean-map diagnostic.  Neither
is an expectation of a sampled product.

\emph{Diminishing-Stepsize Stochastic Stability.}
For a Robbins--Monro schedule with $\alpha_t\downarrow0$, one asks whether the
actual iterates remain bounded and converge almost surely.  This does not
imply stability when $\alpha_t\equiv\alpha>0$.

\emph{Constant-Stepsize Stochastic Stability.}
For the constant schedule $\alpha_t\equiv\alpha>0$, subtract an equilibrium
and write the homogeneous error recursion as
$\bm e_{t+1}=\bm M_t(\alpha)\bm e_t$.  The relevant long-run object is the ordered random product
$\bm M_{t-1}\cdots\bm M_0$.  On the identifiable prediction space, define
the upper top Lyapunov exponent
\begin{equation}
 \lambda(\alpha)=\limsup_{t\to\infty}\frac1t
 \log\|\bm M_{t-1}(\alpha)\cdots\bm M_0(\alpha)\|.
 \label{eq:top-lyapunov}
\end{equation}
A negative exponent gives exponential forgetting of initial perturbations and,
for an affine recursion, generally convergence to a stationary causal process
rather than a deterministic point.  A positive exponent gives an expanding top
direction; in the scalar counterexample every nonzero initial error grows
exponentially.

These notions are not linked by a simple expectation identity.  In particular,
$\E[\bm M_t\bm e_t]\ne \E[\bm M_t]\E[\bm e_t]$ because $F_t$, the state, and
the current iterate are correlated.  Infinite
variance of $F_t$ warns that shocks are severe, but does not determine the sign
of Equation~\ref{eq:top-lyapunov}; that sign must be analyzed from the product.

The matrices $\bm A$ and $\bm G_c$ used later are stationary mean matrices,
whereas sampled factors generate ordered random products.  A spectral-radius
calculation for a mean matrix cannot replace a Lyapunov-exponent calculation
for its sampled product.

\subsection{A Two-State Counterexample}
\label{sec:two-state-task}

Let $\mathcal S=\{s_1,s_2\}$ and
$\mathcal A=\{a_0,a_\pi\}$.  With rows denoting the current state and columns
the next state, the action-conditioned transition matrices are
\begin{equation*}
 \bm P^{a_0}=\begin{bmatrix}1&0\\1&0\end{bmatrix},
 \qquad
 \bm P^{a_\pi}=\begin{bmatrix}0&1\\0&1\end{bmatrix}.
\end{equation*}
\begin{figure}[t]
\centering
\begin{tikzpicture}[>=Latex,
  state/.style={circle,draw,minimum size=16mm,align=center,font=\small},
  target/.style={->,blue!65!black,semithick},
  reset/.style={->,gray!80,dashed,semithick}]
  \node[state] (s1) at (-2.1,0) {$s_1$\\$\phi(s_1)=1$};
  \node[state] (s2) at (2.1,0) {$s_2$\\$\phi(s_2)=2$};
  \draw[target] (s1) to[bend left=18]
    node[above,font=\small] {$a_\pi$} (s2);
  \draw[target] (s2) edge[loop right,looseness=5]
    node[right,font=\small] {$a_\pi$} (s2);
  \draw[reset] (s2) to[bend left=18]
    node[below,font=\small] {$a_0$} (s1);
  \draw[reset] (s1) edge[loop left,looseness=5]
    node[left,font=\small] {$a_0$} (s1);
  \node[align=center,font=\small] at (0,-1.55)
    {$\pi(a_\pi\mid s)=1$;\quad
     $\mu(a_\pi\mid s)=.3$, $\mu(a_0\mid s)=.7$\\
     $R_{t+1}=0$,\quad $\gamma=.9$,\quad $\alpha=.1$};
\end{tikzpicture}
\caption{The new two-state counterexample.  Action $a_\pi$ sends either state
to $s_2$, whereas $a_0$ sends either state to $s_1$.  The target policy always
chooses $a_\pi$ and the behavior policy chooses $a_\pi$ with probability $.3$.
Solid blue and dashed gray arrows distinguish the two actions and remain
legible in grayscale.}
\label{fig:two-state-counterexample}
\end{figure}
The target policy always selects $a_\pi$.  In either state the behavior policy
selects $a_\pi$ with probability $p=3/10$ and $a_0$ with probability $7/10$.
Therefore
\begin{equation*}
 \bm P_\mu=
 \begin{bmatrix}7/10&3/10\\7/10&3/10\end{bmatrix},
 \qquad
 \bm d_\mu=\begin{bmatrix}7/10\\3/10\end{bmatrix},
\end{equation*}
and $\rho(a_\pi)=10/3$, $\rho(a_0)=0$.  We set $R_{t+1}=0$,
$\gamma=9/10$, $\alpha=1/10$, $\phi(s_1)=1$, and $\phi(s_2)=2$.

The two states remain distinct: they have different transition destinations
under the two actions and different feature values.  A scalar parameter merely
restricts the two predictions to $(\theta,2\theta)$; it does not merge the
states.  Zero reward makes the true value $\bm v_\pi=\bm0$, which is exactly
represented by $\theta^*=0$.  The affine term then vanishes, so the homogeneous
stability question can be studied without approximation error.

\subsection{The Trace Has a Finite Mean and Infinite Variance}
\label{sec:two-state-trace-moments}

Let $K_t$ be the number of consecutive preceding target actions.  Independence
of behavior actions gives, in stationarity,
$\Prob(K_t=k)=(7/10)(3/10)^k$, $k=0,1,\ldots$.
After an $a_0$ action the next trace is one.  Each target action multiplies the
old trace by
$\gamma\rho=(9/10)(10/3)=3$ before adding one.  Hence
$F_k=1+3+\cdots+3^k=(3^{k+1}-1)/2$.  For every $r>0$, comparison with $3^k$
gives $\E[F_t^r]<\infty$ if and only if $(3/10)3^r<1$.  Since
$(3/10)3=0.9<1$ but $(3/10)3^2=2.7>1$,
so $\E[F_t]<\infty$ but $\E[F_t^2]=\infty$.  The first-moment recursion closes
because $\E[\gamma\rho_t]=\gamma<1$, whereas the second-moment coefficient is
$\E[(\gamma\rho_t)^2]=\gamma^2/p=2.7>1$.  This identifies the heavy-tailed
shock channel but does not prove product expansion.

\subsection{The ETD Mean Recursion Is Strictly Stable}

Here $\bm P_\pi=\bm P^{a_\pi}$.  With $\bm\phi=(1,2)^\trans$, the emphatic
weight vector and scalar ETD coefficient are
\begin{align*}
 \bm f&=(\bm I-\gamma\bm P_\pi^\trans)^{-1}\bm d_\mu
 =\begin{bmatrix}7/10\\93/10\end{bmatrix},\\
 A
 &=\bm\phi^\trans\diag(\bm f)
    (\bm I-\gamma\bm P_\pi)\bm\phi \\
 &=1\cdot\frac7{10}\left(-\frac45\right)
   +2\cdot\frac{93}{10}\left(\frac15\right)
 =\frac{79}{25}.
\end{align*}
The deterministic mean error multiplier is
\begin{equation}
 1-\alpha A=1-\frac1{10}\frac{79}{25}
 =\frac{171}{250}=0.684.
 \label{eq:two-state-mean-factor-new}
\end{equation}
Thus the mean recursion has a strict contraction margin.

\subsection{The Constant-Stepsize ETD Product Nevertheless Expands}
\label{sec:two-state-etd-product}

Use cycles of $L$ target actions followed by one $a_0$ action.  Then
$\Prob(L=\ell)=(7/10)(3/10)^\ell$ and $\E[L+1]=10/7$.
The first target action moves from $s_1$ to $s_2$; every later target action
remains in $s_2$.  Their scalar error multipliers are
\begin{align}
 m_0&=1+\alpha F_0\frac{10}{3}
       (\gamma\phi(s_2)-\phi(s_1))\phi(s_1)=\frac{19}{15},
 \label{eq:two-state-etd-m0-new}\\
 m_k&=1+\alpha F_k\frac{10}{3}
       (\gamma\phi(s_2)-\phi(s_2))\phi(s_2)
 =1-\frac{2F_k}{15}=\frac{16-3^{k+1}}{15},\quad k\ge1.
 \label{eq:two-state-etd-mk-new}
\end{align}
The final $a_0$ action has $\rho=0$, so ETD applies the identity.  The cycle
factor is $C_L=\prod_{k=0}^{L-1}m_k$, with $C_0=1$.  Its first four nonempty
values are $C_1=19/15$, $C_2=133/225$ and $C_3=-1463/3375$,
$C_4=19019/10125$.
Direct outward-rounded evaluation gives
\begin{equation*}
 \sum_{L=1}^{4}\frac7{10}\left(\frac3{10}\right)^L
 \log|C_L|>0.004295228922459683.
\end{equation*}
Moreover $|C_4|>1$ and all factors appended after $m_3$ have magnitude above
one, so omitted terms with $L\ge5$ are nonnegative.

For the full-time limit, there are constants $a_0,a_1<\infty$ such that
$|\log|m_k||\le a_0+a_1k$ for $k\ge0$, because none of the factors in
Equations~\ref{eq:two-state-etd-m0-new}--\ref{eq:two-state-etd-mk-new} vanishes
and their logarithms grow at most linearly in $k$.  Consequently,
\begin{equation}
 |\log|C_L||\le a_0L+\frac{a_1}{2}L(L-1).
 \label{eq:two-state-cycle-log-integrable-new}
\end{equation}
The geometric law of $L$ therefore gives $\E|\log|C_L||<\infty$.

Let $L_1,L_2,\ldots$ be the successive iid run lengths, put
$\tau_n=L_n+1$, and let $T_N=\sum_{n=1}^N\tau_n$ be the $N$th cycle endpoint.
For every nonzero initial error, the strong law gives
\begin{equation*}
 \frac{1}{T_N}\log\left|\frac{\theta_{T_N}}{\theta_0}\right|
 =\frac{N}{T_N}\frac1N\sum_{n=1}^N\log|C_{L_n}|
 \longrightarrow
 \frac{\E\log|C_L|}{\E[L+1]}
 \quad\text{a.s.}
\end{equation*}
The endpoint limit is also the full-time limit.  If
$T_N\le t<T_{N+1}$, the unfinished part is bounded by
Equation~\ref{eq:two-state-cycle-log-integrable-new} with $L=L_{N+1}$.  For
every $\varepsilon>0$,
$\sum_{n=1}^{\infty}\Prob(L_n^2>\varepsilon n)<\infty$; the geometric tail
and Borel--Cantelli give $L_n^2/n\to0$ almost surely, while
$T_N/N\to\E[L+1]>0$.  The unfinished-cycle contribution is therefore $o(t)$,
and the per-transition exponent is
\begin{equation}
 \lambda_E=
 \frac{\E\log|C_L|}{\E[L+1]}
 >0.003006660245721778>0.
 \label{eq:two-state-etd-positive-new}
\end{equation}

Equations~\ref{eq:two-state-mean-factor-new} and
\ref{eq:two-state-etd-positive-new} give the separation: the mean multiplier
is $171/250$, yet every nonzero scalar ETD error has almost-sure logarithmic
growth above $0.003006660245721778$.  The interpolation above makes this a
complete-trajectory statement, not only a cycle-endpoint statement.

The trace-moment calculation identifies the shock source, whereas the cycle
calculation proves product expansion.  The final $\rho=0$ identity cannot
respond to the preceding target-action run; Section~\ref{sec:retd} modifies
this post-shock response.

\section{RETD: Delayed Shock Recovery and Solution Preservation}
\label{sec:retd}

The two-state analysis in Section~\ref{sec:two-state-task} identifies two facts
that must be kept separate.  ETD's emphatic weighting gives a stable and useful
mean system, but the sampled constant-stepsize recursion can accumulate an
expanding ordered product.  The algorithm proposed here therefore changes the
response to an emphatic shock without discarding the emphatic trace or its
fixed-point information.

\subsection{Design Principles and the Unit-Loop-Gain Choice}

Write the scalar ETD signal as $g_t=F_t\rho_t\delta_t$.
After a large $g_t$, the next ratio may be zero, so ETD applies no correction to
the history summarized by $F_t$.  We require the modification to leave $F_t$
and $\rho_t$ unchanged, react after zero-ratio samples, add and decay only one
scalar state, preserve the solution along a recoverable direction, and retain
$O(d)$ cost.
These requirements are met by the following first-order scalar-feedback
family:
\begin{equation}
\begin{aligned}
 \bm\theta_{t+1}
 &=\bm\theta_t+\alpha_t(g_t-\kappa\omega_t)\bm\phi(S_t),\\
 \omega_{t+1}
 &=\omega_t+\alpha_t\{\eta g_t-\beta\omega_t\},
 \qquad \kappa,\eta,\beta>0.
\end{aligned}
\label{eq:general-first-order-repair}
\end{equation}
If $\widetilde\omega_t=\kappa\omega_t$, the updates become
$\bm\theta_{t+1}=\bm\theta_t+
\alpha_t(g_t-\widetilde\omega_t)\bm\phi(S_t)$ and
$\widetilde\omega_{t+1}=\widetilde\omega_t+
\alpha_t\{\ell g_t-\beta\widetilde\omega_t\}$, where only the loop-gain
product $\ell=\kappa\eta$ remains.  The coordinate change sets the coefficient
in the $\bm\theta$ update to one but leaves $\ell$ invariant.  RETD chooses
$\ell=1$,
so that the shock enters both channels with the same gain.  We use the
representative $\kappa=\eta=1$ and write $\beta=1+c$.  This defines the
one-parameter family; it is a design choice, not a uniqueness claim for the
broader family in Equation~\ref{eq:general-first-order-repair}.

The resulting RETD recursion is
\begin{equation}
\begin{aligned}
 \bm\theta_{t+1}
 &=\bm\theta_t+\alpha_t(g_t-\omega_t)\bm\phi(S_t),\\
 \omega_{t+1}
 &=\omega_t+\alpha_t\{g_t-(1+c)\omega_t\},
 \qquad c>0.
\end{aligned}
\label{eq:retd-update-new}
\end{equation}
The follow-on trace and $\rho_t$ are unchanged, so RETD does not make the
infinite trace variance finite.  It changes only the joint dynamics after the
same shock, with one inner product and one scalar recursion per update.

\subsection{Impulse Response and the Necessity of Positive Leakage}

For a constant stepsize, a sample with $\rho_t=0$ has homogeneous matrix
\begin{equation}
 \bm M_t^R=
 \begin{bmatrix}
  \bm I&-\alpha\bm\phi(S_t)\\
  \bm0^\trans&1-\alpha(1+c)
 \end{bmatrix}.
 \label{eq:retd-zero-ratio-new}
\end{equation}
Unlike ETD's identity update, this matrix retains and decays the stored shock
while correcting the value parameter.  The auxiliary channel requires
$0<\alpha(1+c)<2$, hence $c<2/\alpha-1$ for fixed $\alpha$.

\begin{proposition}[Impulse response of the recovery channel]
\label{prop:retd-impulse-new}
Fix $\alpha,c$ with $0<\alpha(1+c)<2$ and put
$q_c=1-\alpha(1+c)$.  If $\omega_\tau=0$, $g_\tau=g$, and
$g_{\tau+j}=0$ for $j\ge1$, then
\begin{align*}
 \omega_{\tau+j}&=\alpha gq_c^{j-1}, &&j\ge1,
 \\
 \bm\theta_{\tau+j+1}-\bm\theta_{\tau+j}
 &=-\alpha^2gq_c^{j-1}\bm\phi(S_{\tau+j}), &&j\ge1.
\end{align*}
If the post-shock features equal $\bar{\bm\phi}$, the total correction is
$-\alpha g\bar{\bm\phi}/(1+c)$.
\end{proposition}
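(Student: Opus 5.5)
The argument is a direct induction on the RETD recursion in Equation~\ref{eq:retd-update-new} with constant stepsize $\alpha_t\equiv\alpha$. The statement concerns only the forced response to a single impulse, so $g_t$ is treated as an exogenous input sequence: $g_\tau=g$ and $g_{\tau+j}=0$ for $j\ge1$. No homogeneous feedback of $\bm\theta$ into $g$ needs to be tracked. The auxiliary recursion then decouples from $\bm\theta$ entirely, so it is solved first and the result is substituted into the $\bm\theta$ update.

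\textbf{Auxiliary channel.} At time $\tau$ we have $\omega_\tau=0$, so
\[
\omega_{\tau+1}=\omega_\tau+\alpha\{g-(1+c)\omega_\tau\}=\alpha g.
\]
For $j\ge1$ the input vanishes, and the recursion reduces to $\omega_{\tau+j+1}=(1-\alpha(1+c))\omega_{\tau+j}=q_c\,\omega_{\tau+j}$. Induction on $j$ then gives $\omega_{\tau+j}=\alpha g q_c^{j-1}$. This is the scalar decay factor already visible in the lower-right entry of Equation~\ref{eq:retd-zero-ratio-new}.

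\textbf{Parameter increments.} For $j\ge1$ we have $g_{\tau+j}=0$, so
\[
\bm\theta_{\tau+j+1}-\bm\theta_{\tau+j}=-\alpha\,\omega_{\tau+j}\,\bm\phi(S_{\tau+j}).
\]
Substituting the closed form for $\omega_{\tau+j}$ gives $-\alpha^2 g q_c^{j-1}\bm\phi(S_{\tau+j})$. The step at time $\tau$ itself is the ordinary ETD increment $\alpha g\bm\phi(S_\tau)$, because $\omega_\tau=0$. It is not part of the delayed correction and is therefore excluded from the sum.

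\textbf{Total correction.} If $\bm\phi(S_{\tau+j})=\bar{\bm\phi}$ for all $j\ge1$, the delayed increments sum to
\[
-\alpha^2 g\,\bar{\bm\phi}\sum_{j\ge1}q_c^{j-1}.
\]
The hypothesis $0<\alpha(1+c)<2$ is exactly $|q_c|<1$. Under it the geometric series converges absolutely to $1/(1-q_c)=1/(\alpha(1+c))$, which yields $-\alpha g\bar{\bm\phi}/(1+c)$.

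The only step needing care is the last one. It requires $|q_c|<1$ for convergence, and it uses $1-q_c=\alpha(1+c)$ to cancel one factor of $\alpha$. The remaining steps are routine. The result says a fraction $1/(1+c)$ of the initial shock $\alpha g\bar{\bm\phi}$ is eventually undone. When $q_c<0$ the increments alternate in sign, but absolute convergence still holds.
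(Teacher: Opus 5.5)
Your proposal is correct and follows essentially the same route as the paper's proof. You solve the decoupled auxiliary recursion ($\omega_{\tau+1}=\alpha g$, then decay by the factor $q_c$), substitute the result into the $\bm\theta$ update, and sum the geometric series using $|q_c|<1\iff 0<\alpha(1+c)<2$; you also make explicit, which the paper leaves implicit, that the time-$\tau$ ETD increment $\alpha g\bm\phi(S_\tau)$ is not part of the ``total correction.''
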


\begin{proof}
At the shock $\omega_{\tau+1}=\alpha g$; thereafter
$\omega_{t+1}=q_c\omega_t$.  Substitution into the first update and summing
the geometric series (using $|q_c|<1$) gives both identities.
\end{proof}

A positive leakage removes a conserved mode at $c=0$.  If the constant
function lies in the feature span, choose $\bm\ell$ with
$\bm\ell^\trans\bm\phi(s)=1$.  In a zero-reward task,
Equation~\ref{eq:retd-update-new} gives
$J_t=\bm\ell^\trans\bm\theta_t-\omega_t$, which is preserved sample by sample.
The joint product therefore has a
noncontracting left direction.  For $c>0$,
$J_{t+1}=J_t+\alpha_tc\omega_t$, so strict contraction is no longer ruled out,
although some positive values of $c$ may still be unstable.

\subsection{Mean Equilibrium and the Quality of the RETD Solution}
\label{sec:retd-fixed-point}

Let $\bm q=\bm\Phi^\trans\bm d_\mu$ and
$\beta=\bm f^\trans\bm r_\pi$.
In an identifiable full-rank feature coordinate, the stationary RETD mean
system is
\begin{equation*}
 \bm G_c
 \begin{bmatrix}\bm\theta_c\\\omega_c\end{bmatrix}
 =\bm h,
 \qquad
 \bm G_c=
 \begin{bmatrix}\bm A&\bm q\\\bm q^\trans&1+c\end{bmatrix},
 \qquad
 \bm h=\begin{bmatrix}\bm b\\\beta\end{bmatrix}.
\end{equation*}
If the original $n\times d$ feature matrix has rank $r<d$, let
$\bm B\in\mathbb R^{d\times r}$ have orthonormal columns spanning
$\operatorname{row}(\bm\Phi)$ and use
\begin{equation}
 \widetilde{\bm\Phi}=\bm\Phi\bm B,
 \quad
 \widetilde{\bm A}=\bm B^\trans\bm A\bm B,
 \quad
 \widetilde{\bm q}=\bm B^\trans\bm q,
 \quad
 \widetilde{\bm b}=\bm B^\trans\bm b.
 \label{eq:identifiable-basis-new}
\end{equation}
Then $\widetilde{\bm\Phi}$ has full column rank and
$\widetilde{\bm\theta}=\bm B^\trans\bm\theta$ is identifiable.  All inverses
below are in these coordinates; the orthogonal parameter component is unchanged
and prediction-null.

Define $\bm u=\bm A^{-1}\bm q$,
$s=\bm q^\trans\bm A^{-1}\bm q$,
$m_E=\beta-\bm q^\trans\bm\theta_E$, and
$\bm\theta_E=\bm A^{-1}\bm b$.

\begin{proposition}[Exact single-$c$ recovery]
\label{prop:single-c-recovery-new}
If $1+c-s\ne0$, the RETD equilibrium is unique and satisfies
\begin{equation}
 \omega_c=\frac{m_E}{1+c-s},
 \qquad
 \bm\theta_c=\bm\theta_E-\bm A^{-1}\bm q\,\omega_c.
 \label{eq:retd-shift-new}
\end{equation}
Consequently,
\begin{equation*}
 \boxed{\bm\theta_E=\bm\theta_c+\bm A^{-1}\bm q\,\omega_c.}
\end{equation*}
\end{proposition}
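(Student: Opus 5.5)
The plan is block elimination (a Schur complement) on the stationary mean system $\bm G_c[\bm\theta_c;\omega_c]=\bm h$, working throughout in the identifiable coordinates of Equation~\ref{eq:identifiable-basis-new}. In these coordinates $\bm A$ is positive stable under the standard emphatic conditions, hence invertible. This is what makes $\bm\theta_E$, $\bm u=\bm A^{-1}\bm q$, $s$, and $m_E$ well defined.

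\textbf{Eliminate $\bm\theta$.} The first block row reads $\bm A\bm\theta_c+\bm q\,\omega_c=\bm b$. Multiplying by $\bm A^{-1}$ gives
\[
\bm\theta_c=\bm A^{-1}\bm b-\bm A^{-1}\bm q\,\omega_c=\bm\theta_E-\bm u\,\omega_c .
\]
Rearranging this identity already yields the boxed recovery formula, for any solution of the first block row.

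\textbf{Solve for $\omega$.} Substituting into the second block row $\bm q^\trans\bm\theta_c+(1+c)\omega_c=\beta$ gives $\bm q^\trans\bm\theta_E-s\,\omega_c+(1+c)\omega_c=\beta$. Hence $(1+c-s)\,\omega_c=\beta-\bm q^\trans\bm\theta_E=m_E$, and dividing by the nonzero scalar $1+c-s$ gives Equation~\ref{eq:retd-shift-new}.

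\textbf{Uniqueness.} Use the block determinant identity $\det\bm G_c=\det\bm A\cdot(1+c-s)$, where $1+c-s$ is the Schur complement of $\bm A$ in $\bm G_c$. This is nonzero under the hypothesis, so $\bm G_c$ is invertible and the equilibrium is unique. Alternatively, the derivation above shows that any solution must equal the displayed one, and substituting back verifies that it is indeed a solution.

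\textbf{Main obstacle.} The algebra itself is routine. The only delicate point is justifying that $\bm A$ is invertible in the reduced coordinates, i.e.\ that $\widetilde{\bm A}=\bm B^\trans\bm A\bm B$ inherits invertibility. I would argue this from the emphatic positive-definiteness of $\bm\Phi^\trans\bm F(\bm I-\gamma\bm P_\pi)\bm\Phi$ on the row space, using that $\widetilde{\bm\Phi}$ has full column rank. Finally, I would note that the orthogonal parameter component plays no role, since it is prediction-null.
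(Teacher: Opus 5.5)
Your proposal is correct and follows the paper's proof. You eliminate $\bm\theta_c$ via the first block row to get $\bm\theta_c=\bm\theta_E-\bm A^{-1}\bm q\,\omega_c$, substitute into the second row to obtain $(1+c-s)\omega_c=m_E$, and read off uniqueness from $\det\bm G_c=\det(\bm A)(1+c-s)$, exactly as the paper does. Your remark that $\widetilde{\bm A}$ is invertible, because the emphatic key matrix has a positive definite symmetric part and $\widetilde{\bm\Phi}$ has full column rank, is a valid supplement to what the paper takes for granted when it writes $\bm A^{-1}$.
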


\begin{proof}
Subtracting $\bm A\bm\theta_E=\bm b$ from the first block equation gives
$\bm\theta_c-\bm\theta_E=-\bm A^{-1}\bm q\omega_c$.  Substitution into the
second gives $(1+c-s)\omega_c=m_E$ and hence both formulas.  The same condition
is equivalent to $\det\bm G_c=\det(\bm A)(1+c-s)\ne0$.
\end{proof}

Equation~\ref{eq:retd-shift-new} shows that the raw parameter differs from ETD
only along $\bm u$; in value space,
$\bm\Phi\bm\theta_c=\bm\Phi\bm\theta_E-(\bm\Phi\bm u)\omega_c$.  The recovered
output cancels this direction and inherits the ETD fixed-point
guarantee under the same assumptions and norm.  This is an equilibrium identity,
not equality of finite-time trajectories, covariances, or action rankings.

If $1+c-s=0$ and $m_E\ne0$, the mean equations are inconsistent; if both are
zero, the equilibrium is not unique.  If $m_E=0$ and $1+c-s\ne0$, then
$\omega_c=0$ and raw RETD already equals ETD, as in the zero-reward examples.

\begin{corollary}[Two-$c$ model-free readout]
\label{cor:two-c-recovery-new}
Let $(\bm\theta_1,\omega_1)$ and $(\bm\theta_2,\omega_2)$ be RETD equilibria
for distinct regularizers.  If $\omega_1\ne\omega_2$, then
\begin{equation}
 \boxed{
 \bm\theta_E=
 \frac{\omega_1\bm\theta_2-\omega_2\bm\theta_1}
      {\omega_1-\omega_2}.}
 \label{eq:two-c-readout-new}
\end{equation}
Moreover,
$|\omega_1-\omega_2|=|m_E|\,|c_2-c_1|/
(|1+c_1-s|\,|1+c_2-s|)$.
\end{corollary}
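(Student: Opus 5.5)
The plan is to apply Proposition~\ref{prop:single-c-recovery-new} to each regularizer separately and eliminate the unknown direction $\bm u=\bm A^{-1}\bm q$. Throughout, we work in the identifiable coordinates of Equation~\ref{eq:identifiable-basis-new}. We implicitly assume the hypothesis of the proposition, $1+c_i-s\ne0$ for $i=1,2$, so that each equilibrium is unique. This is also needed for the stated formula for $|\omega_1-\omega_2|$ to make sense. If one preferred to avoid that assumption for the readout itself, the block equations still give $\bm A\bm\theta_i+\bm q\omega_i=\bm b$ directly, and that is all the first part requires.

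\textbf{Step 1 (readout).} From the first block row of $\bm G_{c_i}$ we have $\bm\theta_i=\bm\theta_E-\bm u\,\omega_i$ for $i=1,2$; this is Equation~\ref{eq:retd-shift-new}. Multiply the $i=2$ identity by $\omega_1$, multiply the $i=1$ identity by $\omega_2$, and subtract. The $\bm u$ terms cancel, since both equal $\bm u\,\omega_1\omega_2$. What remains is
\[
\omega_1\bm\theta_2-\omega_2\bm\theta_1=(\omega_1-\omega_2)\bm\theta_E .
\]
Dividing by $\omega_1-\omega_2\ne0$ yields Equation~\ref{eq:two-c-readout-new}. The formula is model-free because it uses only the equilibrium pairs, not $\bm A$, $\bm q$, or $s$.

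\textbf{Step 2 (gap formula).} Substitute $\omega_i=m_E/(1+c_i-s)$ from Proposition~\ref{prop:single-c-recovery-new}. Over a common denominator,
\[
\omega_1-\omega_2=\frac{m_E\,\{(1+c_2-s)-(1+c_1-s)\}}{(1+c_1-s)(1+c_2-s)}=\frac{m_E\,(c_2-c_1)}{(1+c_1-s)(1+c_2-s)} .
\]
Taking absolute values gives the claimed expression. The same formula shows that the condition $\omega_1\ne\omega_2$ is equivalent to $m_E\ne0$ once $c_1\ne c_2$. This matches the earlier remark: when $m_E=0$, raw RETD already equals ETD, and the readout is not needed.

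There is no real obstacle here. The only point requiring care is bookkeeping on hypotheses. The readout needs only the block-row identity together with $\omega_1\ne\omega_2$. The gap formula additionally needs both determinants $\det\bm G_{c_i}=\det(\bm A)(1+c_i-s)$ to be nonzero, which is what lets us invoke the explicit expression for $\omega_i$.
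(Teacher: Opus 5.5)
Your proof is correct and follows the paper's argument: substitute $\bm\theta_i=\bm\theta_E-\bm u\,\omega_i$ from Proposition~\ref{prop:single-c-recovery-new}, eliminate $\bm u$ to obtain the readout, and subtract the two expressions $\omega_i=m_E/(1+c_i-s)$ to obtain the gap formula. Your hypothesis accounting is also sound and slightly sharper than the paper's: the readout needs only $\bm A\bm\theta_i+\bm q\omega_i=\bm b$ with $\bm A$ invertible and $\omega_1\ne\omega_2$, while the gap formula additionally needs $1+c_i-s\ne0$.
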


\begin{proof}
Use $\bm\theta_i=\bm\theta_E-\bm u\omega_i$ from
Proposition~\ref{prop:single-c-recovery-new}; eliminating $\bm u$ gives the
readout, and subtracting $\omega_i=m_E/(1+c_i-s)$ gives its conditioning
formula.
\end{proof}

The two-$c$ formula is asymptotic, not finite-time ETD reproduction, and becomes
ill conditioned when $m_E$ is near zero, the regularizers are close, or
estimation errors approach $|\omega_1-\omega_2|$.  In the zero-reward examples
both auxiliary equilibria are zero, giving $0/0$ because no recovery is needed.

\emph{Which Output Is Used in Prediction.}
The raw online predictor is $\bm\Phi\bm\theta_t$.  If the model quantities
$\bm A$ and $\bm q$ are available, the single-$c$ predictor instead uses
$\bm\Phi\{\bm\theta_t+\bm A^{-1}\bm q\,\omega_t\}$.
This model-based readout does not alter learner dynamics.  If
$\bm A^{-1}\bm q$ is unavailable, two RETD learners with distinct $c$ values can
share a trajectory and apply Equation~\ref{eq:two-c-readout-new} after averaging;
this doubles the parameter updates and must report
$|\bar\omega_1-\bar\omega_2|$.  Section~\ref{sec:experiments} evaluates both
readouts on nonzero-reward tasks.

\subsection{RETD Closes the Two-State Sign Separation}
\label{sec:two-state-retd}

Return to the counterexample in Section~\ref{sec:two-state-task}.  Here
$q=\E[\phi(S_t)]=13/10$.
At $c=4$, the RETD mean matrix is
\begin{equation}
 \bm G_4=
 \begin{bmatrix}79/25&13/10\\13/10&5\end{bmatrix},
 \qquad
 \det(\bm G_4)=\frac{1411}{100}>0.
 \label{eq:two-state-retd-mean-new}
\end{equation}
It is symmetric positive definite, and
$\rho(\bm I-0.1\bm G_4)\approx0.7513<1$.  Thus both ETD and RETD have stable
mean recursions on the same counterexample.

For the regenerative product, put
$h_k=F_k\rho_k\{\gamma\phi(S_{k+1})-\phi(S_k)\}$.
At $c=4$, the auxiliary retention is
$1-\alpha(1+c)=1/2$.  The $k$th target-action matrix is
\begin{equation*}
 \bm T_k=
 \begin{bmatrix}
 1+\alpha h_k\phi_k&-\alpha\phi_k\\
 \alpha h_k&1/2
 \end{bmatrix},
\end{equation*}
where $h_0=8/3$, $\phi_0=1$, and
$h_k=-2F_k/3$, $\phi_k=2$ for $k\ge1$.
The final off-target action has matrix
\begin{equation*}
 \bm D_1=\begin{bmatrix}1&-1/10\\0&1/2\end{bmatrix},
 \qquad
 \bm D_2=\begin{bmatrix}1&-1/5\\0&1/2\end{bmatrix},
\end{equation*}
depending on the current state.  The complete cycle matrices are
\begin{equation*}
 \bm M_0=\bm D_1,
 \qquad
 \bm M_L=\bm D_2\bm T_{L-1}\cdots\bm T_0,
 \quad L\ge1.
\end{equation*}
The dashed/off-target matrix is essential: omitting it would remove the
recovery mechanism.

Use the induced similarity norm
\begin{equation*}
 \bm S=\begin{bmatrix}4/3&-1/3\\0&3/4\end{bmatrix},
 \qquad
 \|\bm M\|_S=\|\bm S\bm M\bm S^{-1}\|_2.
\end{equation*}
Exact rational cycle matrices and outward-rounded spectral-norm calculations
give the certificate in Table~\ref{tab:two-state-retd-certificate-new}.

\begin{table}[t]
\centering
\footnotesize
\setlength{\tabcolsep}{4pt}
\renewcommand{\arraystretch}{.95}
\begin{tabular}{c c r}
\toprule
$L$ & $\Prob(L)$ & upper bound on
$\Prob(L)\log\|\bm M_L\|_S$\\
\midrule
0 & $7/10$ & $\phantom{-}0.000919795298289812$\\
1 & $21/100$ & $\phantom{-}0.035635247962602710$\\
2 & $63/1000$ & $-0.029235245805596898$\\
3 & $189/10000$ & $-0.026797225126525312$\\
4 & $567/100000$ & $\phantom{-}0.002112080306350725$\\
5 & $1701/1000000$ & $\phantom{-}0.005207967408664208$\\
6 & $5103/10000000$ & $\phantom{-}0.003536032736317089$\\
\midrule
sum & & $-0.008621347219897670$\\
\bottomrule
\end{tabular}
\caption{Finite prefix of the two-state RETD cycle certificate.  The cycle
matrices are rational; only outward-rounded logarithmic bounds are printed.}
\label{tab:two-state-retd-certificate-new}
\end{table}

For every $L\ge7$, the inequalities
$\|\bm S\|_2\|\bm S^{-1}\|_2<20/9$,
$\|\bm D_2\|_2<11/10$, $\|\bm T_k\|_2\le(9/5)F_k$, and
$F_k<(3/2)3^k$ give
\begin{equation*}
 \log\|\bm M_L\|_S
 \le \log\frac{22}{9}+L\log\frac{27}{10}
      +\frac{L(L-1)}2\log3.
\end{equation*}
Multiplying by the geometric cycle probabilities and summing the tail yields
\[
 \sum_{L=7}^{\infty}\frac7{10}\left(\frac3{10}\right)^L
 \log\|\bm M_L\|_S<0.007619669942317829.
\]
Together with the finite prefix,
$\E\log\|\bm M_L\|_S<-0.0010016772775798414$.  Dividing by the mean cycle
length gives the cycle-endpoint bound.  For every transition, let
$\bm\Pi_N=\bm M_{L_N}\cdots\bm M_{L_1}$ and
$T_N=\sum_{n=1}^N(L_n+1)$.
Submultiplicativity and the strong law imply
\begin{equation}
 \limsup_{N\to\infty}\frac1{T_N}\log\|\bm\Pi_N\|_S
 \le
 \frac{\E\log\|\bm M_L\|_S}{\E[L+1]}
 \quad\text{a.s.}
 \label{eq:two-state-retd-boundary-new}
\end{equation}
The same factor bounds give constants $d_0,d_1<\infty$ such that every partial
product within a cycle of length $L$ satisfies
$\max_j\log^+\|\bm T_{j-1}\cdots\bm T_0\|_S\le d_0+d_1L^2$, where
$\log^+x=\max\{0,\log x\}$.
The geometric cycle law and Borel--Cantelli give $L_n^2/n\to0$, whereas
$T_N/N\to10/7$; the unfinished cycle is therefore $o(t)$.  Hence
Equation~\ref{eq:two-state-retd-boundary-new} holds for the complete product
and proves
\begin{equation}
 \lambda_R(\alpha=0.1,c=4)
 \le-0.0007011740943058889<0.
 \label{eq:two-state-retd-negative-new}
\end{equation}

Together, Equations~\ref{eq:two-state-etd-positive-new},
\ref{eq:two-state-retd-mean-new}, and
\ref{eq:two-state-retd-negative-new} give the sign separation.  Both mean maps
contract, but at $\alpha=.1$,
$\lambda_E>0.003006660245721778$ while
$\lambda_R\le-0.0007011740943058889$.
The $S$-norm bound and the unfinished-cycle estimate give almost-sure
contraction at every transition.  Because rewards are zero, RETD's affine
residual vanishes sample by sample.  The sign difference comes from the final
zero-ratio action: ETD applies the identity, whereas RETD's $\bm D_i$ releases
the delayed correction and decays the stored shock.  Section~\ref{sec:theory}
treats stochastic stability beyond this example.

\section{Stochastic Stability of RETD}
\label{sec:theory}

The expected RETD matrix supplies the common geometric certificate for two
stochastic regimes.  We first establish almost-sure convergence with a
diminishing stepsize and then establish contraction of the constant-stepsize
random product.  The first result is a stochastic-approximation statement;
the second addresses the mean-to-path question isolated by the counterexample.

\subsection{Common Geometry and the Explicit Role of \texorpdfstring{$c$}{c}}

Work in an identifiable, full-column-rank feature coordinate and let
$\bm H=\Sym(\bm A)=(\bm A+\bm A^\trans)/2$.

\begin{assumption}[Finite off-policy prediction model]
\label{ass:common-theory-new}
The state and action spaces are finite.  The behavior chain is irreducible and
aperiodic, the target policy is covered by the behavior policy, rewards are
bounded functions of $(S_t,A_t,S_{t+1})$, interest is one,
$\gamma\in[0,1)$ is constant, and $\lambda=0$.  The identifiable feature
matrix has full column rank, its rows are bounded, and
$\bm H\succ0$.
\end{assumption}

\begin{assumption}[RETD regularization]
\label{ass:c-theory-new}
The parameter $c$ is chosen before the run and is required to satisfy
$c>c_{\min}:=\max\{0,\bm q^\trans\bm H^{-1}\bm q-1\}$.
\end{assumption}

This is a direct hypothesis of both RETD theorems, rather than a choice made
inside either proof.  Under Assumption~\ref{ass:common-theory-new}, it is
equivalent to positive definiteness of the symmetric RETD mean matrix.  To see
this, define
\begin{equation*}
 \Delta_c=1+c-\bm q^\trans\bm H^{-1}\bm q>0,
 \qquad
 \bm U=\begin{bmatrix}\bm I&\bm H^{-1}\bm q\\\bm0^\trans&1\end{bmatrix}.
\end{equation*}
Then the exact congruence
\begin{equation}
 \Sym(\bm G_c)=
 \begin{bmatrix}\bm H&\bm q\\\bm q^\trans&1+c\end{bmatrix}
 =\bm U^\trans
 \begin{bmatrix}\bm H&\bm0\\\bm0^\trans&\Delta_c\end{bmatrix}\bm U
 \succ0
 \label{eq:retd-schur-congruence-new}
\end{equation}
shows that $-\bm G_c$ is Hurwitz.  In particular,
\begin{equation}
 \underline m_c=
 \frac{\min\{\lambda_{\min}(\bm H),\Delta_c\}}
      {\|\bm U^{-1}\|_2^2}
 \le\lambda_{\min}\{\Sym(\bm G_c)\}.
 \label{eq:mc-new}
\end{equation}
The bound follows from the RETD mean matrix and does not assume contraction of
the sampled matrix product.

\subsection{Diminishing-Stepsize Stochastic Stability}

The follow-on trace is unbounded, so an independent, bounded-noise
Robbins--Monro template is not sufficient.  We use the invariant-law and
pathwise averaging results for the ETD trace process in
\citet{yu2015convergence} and the general-state-space ODE theorem of
\citet{liu2025ode}.  Both are external results; the proof below verifies their
conditions for the actual RETD recursion.

\begin{theorem}[Diminishing-stepsize RETD]
\label{thm:retd-diminishing-new}
Suppose the common conditions in Assumption~\ref{ass:common-theory-new} and the
explicit $c$ condition in Assumption~\ref{ass:c-theory-new} hold.  Let
$\alpha_t=B_1/(t+B_2)$, where $B_1,B_2>0$.
Then the joint RETD iterates
$\bm z_t=[\bm\theta_t^\trans,\omega_t]^\trans$ are almost surely bounded and
converge to the unique RETD mean equilibrium:
\begin{equation}
 \sup_t\|\bm z_t\|_2<\infty\quad\text{a.s.},
 \qquad
 \bm z_t\longrightarrow \bm z_c^*=\bm G_c^{-1}\bm h
 \quad\text{a.s.}
 \label{eq:diminishing-conclusion-new}
\end{equation}
For a rank-deficient original representation, the conclusion holds in the
identifiable coordinate; the parameter-null component is unchanged and the
predictions converge to the corresponding RETD equilibrium prediction.
\end{theorem}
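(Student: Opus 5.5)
The plan is to write RETD as a linear stochastic approximation driven by a Markov chain and then apply the ODE method. Let $X_t=(S_t,A_t,S_{t+1},F_t)$. The process $X_t$ is Markov on a general state space, because $F_t$ is real valued and unbounded. Then Equation~\ref{eq:retd-update-new} reads
\begin{equation*}
 \bm z_{t+1}=\bm z_t+\alpha_t\{\bm h(X_t)-\bm G(X_t)\bm z_t\},
\end{equation*}
with sampled matrix
\begin{equation*}
 \bm G(X_t)=\begin{bmatrix}-F_t\rho_t\bm\phi(S_t)(\gamma\bm\phi(S_{t+1})-\bm\phi(S_t))^\trans & \bm\phi(S_t)\\ -F_t\rho_t(\gamma\bm\phi(S_{t+1})-\bm\phi(S_t))^\trans & 1+c\end{bmatrix}
\end{equation*}
and sampled offset $\bm h(X_t)=[F_t\rho_tR_{t+1}\bm\phi(S_t)^\trans,\,F_t\rho_tR_{t+1}]^\trans$. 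Every entry is affine in $F_t$ with bounded coefficients. The first step is to identify the averaged field. By Yu's invariant-law results \citep{yu2015convergence}, $X_t$ has a unique invariant probability $\zeta$, and $\E_\zeta[F_t\rho_t\,\cdot\,]$ reproduces the emphatic weights $\bm f$. Hence $\E_\zeta\bm G(X)$ has upper-left block $\bm A$, off-diagonal entries $\bm q=\bm\Phi^\trans\bm d_\mu$ and $\E_\zeta[F\rho\,\delta\text{-coefficient}]$, and corner $1+c$, while $\E_\zeta\bm h(X)=\bm h$.

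One point needs care. The lower-left row of the sampled mean is the $\omega$-equation analogue of $\bm b-\bm A\bm\theta$, namely $\bm f^\trans(\bm r_\pi-(\bm I-\gamma\bm P_\pi)\bm\Phi\bm\theta)$. For unit interest, $\bm f^\trans(\bm I-\gamma\bm P_\pi)=\bm d_\mu^\trans$, so this equals $\beta-\bm q^\trans\bm\theta$. The mean field is therefore exactly $\bm h-\bm G_c\bm z$, with $\bm G_c$ as in Section~\ref{sec:retd-fixed-point}. I would verify this identity explicitly first, since it is what makes $\bm G_c$ the correct limit matrix.

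The second step is stability of the limiting ODE $\dot{\bm z}=\bm h-\bm G_c\bm z$ and of its scaled limit $\dot{\bm z}=-\bm G_c\bm z$. By Assumption~\ref{ass:c-theory-new} and the congruence in Equation~\ref{eq:retd-schur-congruence-new}, $\Sym(\bm G_c)\succ0$. So $V(\bm z)=\|\bm z-\bm z_c^*\|_2^2$ is a strict Lyapunov function with decay rate $2\underline m_c$ from Equation~\ref{eq:mc-new}. In particular $\bm G_c$ is invertible, $\bm z_c^*=\bm G_c^{-1}\bm h$ is the unique globally asymptotically stable equilibrium, and the origin is globally asymptotically stable for the scaled ODE. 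This is the Borkar--Meyn-type stability condition.

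The third step, which I expect to be the main obstacle, is verifying the noise and averaging hypotheses of the general-state-space ODE theorem of \citet{liu2025ode} despite the infinite variance of $F_t$. Martingale-difference square-summability is not available, because $\E F_t^2$ may be infinite, as in Section~\ref{sec:two-state-trace-moments}. I would instead rely on pathwise ergodic averaging. Yu shows that for functions affine in $F$, which are $\zeta$-integrable since $\E_\zeta F<\infty$, the time averages $\frac1t\sum_k \bm G(X_k)$ and $\frac1t\sum_k\bm h(X_k)$ converge almost surely from every initial condition. These averages must also be uniform over windows of ODE time, which is the form the theorem needs.

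I would verify the theorem's hypotheses in the following order.
\begin{enumerate}
\item Local Lipschitz continuity in $\bm z$ with an integrable Lipschitz modulus. The modulus can be taken as $C(1+F_t)$.
\item The harmonic stepsize $\alpha_t=B_1/(t+B_2)$. The theorem assumes this schedule precisely so that averaging over windows reduces to Cesàro averages.
\item The almost-sure convergence of these averages, which follows from Yu's strong law for the trace chain.
\end{enumerate}

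The conclusion of that theorem then gives $\sup_t\|\bm z_t\|<\infty$ almost surely and convergence to $\bm z_c^*$. For a rank-deficient representation, I would run the same argument in the coordinates of Equation~\ref{eq:identifiable-basis-new}. The component of $\bm\theta_t$ orthogonal to $\bm B$ never moves, because every increment lies in $\operatorname{row}(\bm\Phi)$. Predictions therefore converge to $\widetilde{\bm\Phi}\widetilde{\bm\theta}_c$.
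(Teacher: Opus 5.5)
Your proposal is correct and takes essentially the same route as the paper. Both write RETD as a linear Markov stochastic approximation with the follow-on-trace driver, and both handle the infinite-variance trace through Yu's invariant law and Lipschitz-in-trace strong law applied to the affine coefficients (which the paper imports as Lemma~25 of \citet{liu2025ode}). Both then use the Borkar--Meyn scaling limit $-\bm G_c(y)\bm z$ with $\Sym(\bm G_c)\succ0$ from the $c$ condition, Liu's boundedness and limit-set results, and the identifiable-coordinate reduction.

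The differences are minor. You use $\|\bm z-\bm z_c^*\|_2^2$ directly as the Lyapunov function, which is valid since $\Sym(\bm G_c)\succ0$, whereas the paper builds $\bm P_c$ from the Hurwitz property. You also verify the lower-left mean block explicitly via $\bm f^\trans(\bm I-\gamma\bm P_\pi)=\bm d_\mu^\trans$, which the paper only asserts.
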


\begin{proof}
We verify the Markovian ODE conditions in five steps.

\textit{Step 1: exact Markov stochastic-approximation form.}
Let $\bm x_t$ be the feature in the identifiable coordinate,
$\bm\psi_t=\bm x_t-\gamma\bm x_{t+1}$, and
$X_t=F_t\rho_t$.  Use the augmented driver
$Y_{t+1}=(S_t,A_t,S_{t+1},F_t,R_{t+1})$.
For $y=(s,a,s',f,r)$ define
\begin{align}
 \bm G_c(y)&=
 \begin{bmatrix}
  X\bm x\bm\psi^\trans&\bm x\\
  X\bm\psi^\trans&1+c
 \end{bmatrix},
 &
 \bm g(y)&=Xr\begin{bmatrix}\bm x\\1\end{bmatrix},
 \label{eq:sample-G-g-new}
\end{align}
where $X=f\pi(a\mid s)/\mu(a\mid s)$.  Direct substitution into
Equation~\ref{eq:retd-update-new} gives
$\bm z_{t+1}=\bm z_t+\alpha_tH(\bm z_t,Y_{t+1})$, where
$H(\bm z,y)=\bm g(y)-\bm G_c(y)\bm z$.
No independence or square-integrability of $F_t$ has been inserted.

\textit{Step 2: invariant law and one common averaging event.}
Lemma 25 of \citet{liu2025ode}, which packages Theorems 3.2 and 3.3 of
\citet{yu2015convergence}, gives the ETD augmented chain a unique invariant
probability measure and an almost-sure law of large numbers for every
finite-dimensional function that is Lipschitz in the trace coordinate.  Our
$\lambda=0$ driver is the corresponding state--action--follow-on-trace chain.
The bounded deterministic reward in
Assumption~\ref{ass:common-theory-new} is a measurable graph of
$(S_t,A_t,S_{t+1})$, so adjoining it neither changes the invariant marginal
nor creates another invariant law.  Denote the resulting invariant law by
$\zeta$.

Every entry of $\bm G_c(y)$ and $\bm g(y)$ is affine in $f$.  Moreover,
$L_c(y)=1+\|\bm G_c(y)\|_2$ is Lipschitz in $f$, because the reverse triangle inequality bounds the
difference of two matrix norms by the norm of their difference.  Lemma 25
therefore gives finite invariant expectations and a law of large numbers for
every scalar entry of $\bm G_c$, every coordinate of $\bm g$, and $L_c$.
There are only finitely many such functions.  Intersecting their probability-
one events yields a single event $\Omega_0$ on which all these averages
converge simultaneously.  In particular, on $\Omega_0$,
\begin{equation*}
 \E_\zeta[\bm G_c(Y)]=\bm G_c,
 \qquad
 \E_\zeta[\bm g(Y)]=\bm h,
 \qquad
 \bar H(\bm z)=\bm h-\bm G_c\bm z.
\end{equation*}

\textit{Step 3: Liu Assumptions 2--4 and 6.}
The harmonic schedule verifies Liu Assumption 2: it is positive and
nonincreasing, $\sum_t\alpha_t=\infty$, $\alpha_t\to0$, and
$(\alpha_t-\alpha_{t+1})/\alpha_t=(t+B_2+1)^{-1}=O(\alpha_t)$.

For a scaling variable $r\ge1$,
\begin{equation}
 H_r(\bm z,y)=\frac{H(r\bm z,y)}r
 =-\bm G_c(y)\bm z+\frac{\bm g(y)}r.
 \label{eq:scaling-limit-new}
\end{equation}
Thus Liu Assumption 3 holds with
$H_\infty(\bm z,y)=-\bm G_c(y)\bm z$, $\kappa(r)=r^{-1}$,
$b(\bm z,y)=\bm g(y)$, and $L_b(y)=0$.
Both $H$ and $H_\infty$ are Lipschitz in $\bm z$ with envelope $L_c(y)$,
whose invariant expectation is finite by Step 2.  This verifies Liu
Assumption 4.

It remains to verify Liu Assumption 6 without taking an uncountable
intersection of null sets.  On the single event $\Omega_0$ from Step 2,
entrywise convergence gives, simultaneously for every $\bm z$,
\begin{align}
 \frac1n\sum_{k=1}^n
 \{H(\bm z,Y_k)-\bar H(\bm z)\}
 &=\frac1n\sum_{k=1}^n\{\bm g(Y_k)-\bm h\}
 \notag\\
 &\quad-
 \left\{\frac1n\sum_{k=1}^n
 [\bm G_c(Y_k)-\bm G_c]\right\}\bm z
 \longrightarrow\bm0.
 \label{eq:common-null-set-lln-new}
\end{align}
The same event covers $L_b=0$ and $L_c$.  Because
$\alpha_n=O(n^{-1})$, multiplying the centered partial sums in
Equation~\ref{eq:common-null-set-lln-new} by $\alpha_n$ makes them converge to
zero.  Liu Assumption 6 is therefore satisfied for all $\bm z$ on one
probability-one event; no square-integrable martingale-difference assumption
has been inserted.

\textit{Step 4: Liu Assumption 5 and the role of $c$.}
Stationary averaging of Equation~\ref{eq:scaling-limit-new} gives
$\bar H_r(\bm z)=-\bm G_c\bm z+\bm h/r\to-\bm G_c\bm z$
uniformly on every compact set.  The limiting mean ODE is
$\dot{\bm z}=-\bm G_c\bm z$.  Write
$\bm z=[\bm\theta^\trans,\omega]^\trans$.  Assumption~\ref{ass:c-theory-new}
gives $\Delta_c>0$, and direct completion of the square yields
\begin{align*}
 \bm z^\trans\Sym(\bm G_c)\bm z
 &=\bm\theta^\trans\bm H\bm\theta
   +2\omega\bm q^\trans\bm\theta+(1+c)\omega^2 \notag\\
 &=\bigl(\bm\theta+\bm H^{-1}\bm q\,\omega\bigr)^\trans
   \bm H
   \bigl(\bm\theta+\bm H^{-1}\bm q\,\omega\bigr)
   +\Delta_c\omega^2>0
\end{align*}
for every nonzero $\bm z$.  Hence
$\Sym(\bm G_c)\succ0$.  If $\lambda$ is an eigenvalue of $\bm G_c$ with
right eigenvector $\bm v\ne0$, then
$\Re(\lambda)=\bm v^*\Sym(\bm G_c)\bm v/(\bm v^*\bm v)>0$.
Thus $\bm G_c$ is positive stable and $-\bm G_c$ is Hurwitz.  Consequently,
the finite matrix $\bm P_c=\int_0^\infty
e^{-\bm G_c^\trans s}e^{-\bm G_cs}\,\mathrm ds\succ0$ satisfies
$\bm G_c^\trans\bm P_c+\bm P_c\bm G_c=\bm I$.
Along every nonzero solution,
\[
 \frac{\mathrm d}{\mathrm dt}(\bm z^\trans\bm P_c\bm z)
 =-\|\bm z\|_2^2
 \le-\frac{\bm z^\trans\bm P_c\bm z}
          {\lambda_{\max}(\bm P_c)}.
\]
Thus the scaling-limit origin is globally exponentially stable, which
completes Liu Assumption 5.

\textit{Step 5: boundedness and identification of the limit.}
Steps 1--4 verify Liu Assumptions 1--6 in their stated order: Step 2 gives
Assumption 1, the harmonic schedule gives Assumption 2, Step 3 gives
Assumptions 3, 4, and 6, and Step 4 gives Assumption 5.  Theorem 7 of
\citet{liu2025ode} therefore yields almost-sure boundedness.  Corollary 8 of
the same paper places the limit set inside a bounded invariant set of
$\dot{\bm z}=\bm h-\bm G_c\bm z$.  Translating by
$\bm z_c^*=\bm G_c^{-1}\bm h$ reduces this ODE to the globally exponentially
stable system in Step 4.  Its only such invariant limit set is the singleton
$\{\bm z_c^*\}$, which proves Equation~\ref{eq:diminishing-conclusion-new}.

In the original rank-deficient coordinate, use the matrix $\bm B$ in
Equation~\ref{eq:identifiable-basis-new}.  Every parameter update lies in
$\operatorname{col}(\bm B)=\operatorname{row}(\bm\Phi)$, so the orthogonal
parameter-null component is conserved; the scalar auxiliary coordinate is
unchanged by this reduction.  Multiplication by $\bm\Phi$ then gives the stated
prediction-space limit.
\end{proof}

The shrinking stepsize is essential to this theorem.  Over any fixed late
window of length $K$,
$0\le1-\alpha_{t+j}/\alpha_t=j/(t+B_2+j)\le K/(t+B_2)$ for $0\le j<K$,
but local near-constancy does not turn the limit $\alpha_t\to0$ into a theorem
for a constant positive $\alpha$.

\subsection{Constant-Stepsize Stochastic Stability}

For a nonzero constant stepsize, the driver must control an unbounded matrix
coefficient over arbitrarily long time intervals.  We state those additional
conditions explicitly.  They concern the Markov trace process and matrix
growth, not the sign of the RETD random product.

\begin{assumption}[Super-Lyapunov driver condition]
\label{ass:driver-new}
The augmented driver $Y_t$ is irreducible and aperiodic.  There exist
$b_0,B_0>0$, $\delta\in(1/2,1]$, a measurable
$V:\mathsf Y\to[e,\infty)$, $W=\log V$, and a small set $C_0$ such that
\begin{equation*}
 PV(y)\le e^{-b_0W(y)^\delta}V(y)\bm1_{C_0^c}(y)
          +B_0\bm1_{C_0}(y),
\end{equation*}
and every finite sublevel set of $W$ is small.
\end{assumption}

\begin{assumption}[Growth of the RETD sample matrix]
\label{ass:growth-new}
For some $C_{G,0}<\infty$, $\epsilon\in(0,1)$, and
$\beta_G<\min\{2\delta-1,\delta/(1+\epsilon)\}$, every entry satisfies
\begin{align}
 |[\bm G_0(y)]_{ij}|&\le C_{G,0}W(y)^{\beta_G},\notag\\
 |[\bm G_c(y)]_{ij}|&\le C_G(c)W(y)^{\beta_G},
 \qquad C_G(c)=C_{G,0}+c.
 \label{eq:growth-Gc-new}
\end{align}
\end{assumption}

For the affine conclusion define $\bm z_c^*=\bm G_c^{-1}\bm h$ and
$\bm\xi_t^R=\bm g(Y_t)-\bm G_c(Y_t)\bm z_c^*$.

\begin{assumption}[Affine residual moment]
\label{ass:residual-new}
For the required $p\ge1$,
$\E_\zeta[V(Y_0)^{1/2}\|\bm\xi_0^R\|_2^p]<\infty$, where $\zeta$ is the
invariant law of the RETD driver.  We avoid the symbol
$\pi$ here because it already denotes the target policy.
\end{assumption}

\subsection{Checkability and Coverage of the Constant-Step Assumptions}

Assumptions~\ref{ass:driver-new}--\ref{ass:growth-new} are not automatic in a
finite MDP.  The augmented state contains the unbounded follow-on trace, so the
driver is a general-state-space Markov chain even when $\mathcal S$ and
$\mathcal A$ are finite.  A conservative directly checkable route is the
uniformly contractive condition $\gamma\rho(s,a)\le r<1$ for every covered
$(s,a)$, combined with a geometric drift/minorization certificate for the resulting
bounded augmented driver.  Then
$F_t\le\max\{F_0,(1-r)^{-1}\}$, the sample matrices are bounded, and the
matrix-growth part becomes immediate.  This sufficient condition does not
cover the trace-amplifying runs used by the two counterexamples.

The heavy-tailed two-state and Baird instances are analyzed outside this
small-stepsize route because explicit super-Lyapunov constants
$b_0,B_0,\delta,V$ are unavailable for their augmented drivers.  Their product
signs instead follow from environment-specific regenerative or projective
certificates.  By contrast, a bounded-trace finite MDP satisfying the
uniformly contractive condition above is covered once its standard
drift/minorization constants are supplied.  This separation prevents the
general theorem from being claimed for the two stress tests without the
required driver certificate.

The external tool is Theorem 1 of \citet{durmus2021stability}.  Under their
UE1 driver condition, A1 matrix-growth condition, and the Hurwitz condition
for the negative stationary mean matrix, that theorem gives, for each
$p\ge1$, a constructive positive sufficient threshold
$\alpha_{\infty,p}$ such that
\begin{equation*}
 \left(\E_y\left\|
 \prod_{k=m+1}^{n}(\bm I-\alpha_k\mathcal A(Y_k))
 \right\|_2^p\right)^{1/p}
 \le C_p\exp\left[-\frac{a}{4}
       \sum_{k=m+1}^n\alpha_k\right]V(y)^{1/(2p)}.
\end{equation*}
Here $\bm Q\succ0$ solves
$\overline{\mathcal A}^\trans\bm Q+
 \bm Q\overline{\mathcal A}=\bm I$ and
$a=(2\|\bm Q\|_2)^{-1}$.  The proof below identifies the RETD driver, matrix
family, stationary mean, Lyapunov matrix, and growth constant required by the
quoted theorem.  In particular, the substitution is
\begingroup
\centering
\footnotesize
\setlength{\tabcolsep}{4pt}
\renewcommand{\arraystretch}{.95}
\begin{tabular}{ll}
\toprule
Durmus et al. object & RETD object \\
\midrule
$Z_k,P,\pi$ & $Y_k,P_Y,\zeta$ \\
$\overline{\mathcal A}(Z_k)$ & $\bm G_c(Y_k)$ \\
$\mathcal A=\E_\pi[\overline{\mathcal A}(Z_0)]$ &
  $\bm G_c=\E_\zeta[\bm G_c(Y_0)]$ \\
$V,W,\delta$ and UE1 constants & those in Assumption~\ref{ass:driver-new} \\
$C_A,\beta$ in A1 & $C_G(c),\beta_G$ in Assumption~\ref{ass:growth-new} \\
$\bm Q,\kappa_Q,a$ & $\bm Q_c,\kappa_{Q_c},a_c$ \\
\bottomrule
\end{tabular}
\par
\endgroup
The block length $h$ and the threshold $\alpha_{\infty,p}$ are then exactly
those in Equations (89) and (90) of \citet{durmus2021stability} after this
substitution.  This table is also the definition used below; no additional
undefined threshold function is introduced.

\begin{theorem}[Constant-stepsize RETD]
\label{thm:retd-fixed-new}
Let Assumptions~\ref{ass:common-theory-new}--\ref{ass:growth-new} hold.
Fix $p\ge1$, and let $\bm Q_c\succ0$ be the unique solution of
\begin{equation}
 \bm G_c^\trans\bm Q_c+\bm Q_c\bm G_c=\bm I,
 \qquad
 \kappa_{Q_c}=\frac{\lambda_{\max}(\bm Q_c)}
                    {\lambda_{\min}(\bm Q_c)},
 \qquad a_c=(2\|\bm Q_c\|_2)^{-1}.
 \label{eq:Qc-new}
\end{equation}
Let $\alpha_{D,p}^R(c)$ be the positive number
$\alpha_{\infty,p}$ in Equation (90) of \citet{durmus2021stability}, with
the complete substitution displayed above and with its block length $h$
defined by their Equation (89):
\begin{equation*}
 \alpha_{D,p}^R(c):=
 \alpha_{\infty,p}\big|_{
 Z=Y,\,\overline{\mathcal A}=\bm G_c(\cdot),\,
 \mathcal A=\bm G_c,\,\bm Q=\bm Q_c,\,C_A=C_G(c)}.
\end{equation*}
The number is computable after the UE1 drift and minorization constants are
supplied, is sufficient rather than sharp, and exists by the quoted theorem.
Define
\begin{equation}
 \overline\alpha_p^R(c)=
 \min\left\{
  \alpha_{D,p}^R(c),
  \frac{2\underline m_c}{\|\bm G_c\|_2^2},
  \frac{2}{1+c}
 \right\}.
 \label{eq:alpha-bar-new}
\end{equation}
The first term is the external random-product threshold.  The second and third
terms are additional transparent RETD restrictions: they make the
deterministic mean step contractive and give the isolated auxiliary retention
$|1-\alpha(1+c)|<1$.  They are not assumptions of
Theorem 1 of \citet{durmus2021stability}.
If $0<\alpha<\overline\alpha_p^R(c)$, put
\begin{equation*}
 \bm A_t(c)=\bm I-\alpha\bm G_c(Y_t),
 \qquad
 \bm\Phi_R(t,s;c)=\bm A_t(c)\cdots\bm A_s(c),\quad s\le t,
 \qquad
 \bm\Phi_R(t,t+1;c)=\bm I.
\end{equation*}
Then, with time increasing from right to left,
\begin{equation}
 \left(\E_y\|\bm\Phi_R(n,1;c)\|_2^p\right)^{1/p}
 \le C_{c,p}e^{-(a_c/4)\alpha n}V(y)^{1/(2p)}.
 \label{eq:retd-moment-contraction-new2}
\end{equation}
Consequently,
\begin{equation}
 \lambda_R(\alpha,c)
 :=\limsup_{n\to\infty}\frac1n\log\|\bm\Phi_R(n,1;c)\|_2
 \le-\frac{a_c\alpha}{4}
 =-\frac{\alpha}{8\|\bm Q_c\|_2}<0
 \quad\text{a.s.}
 \label{eq:retd-top-negative-new2}
\end{equation}
If Assumption~\ref{ass:residual-new} also holds on a two-sided stationary
extension of the driver, the centered affine recursion has the unique finite-
$p$th-moment stationary causal solution
\begin{equation}
 \bm e_t^\circ=
 \alpha\sum_{j=0}^{\infty}
 \bm\Phi_R(t,t-j+1;c)\bm\xi_{t-j}^R,
 \label{eq:stationary-series-new2}
\end{equation}
and every two solutions driven by the same samples forget their initial
conditions exponentially.  If $\bm\xi_t^R\equiv\bm0$ sample by sample, the
iterate itself converges exponentially to $\bm z_c^*$ almost surely.
\end{theorem}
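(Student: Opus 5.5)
The plan is to reduce everything to Theorem~1 of \citet{durmus2021stability} and then add the standard random-product consequences. First, the hypotheses of that theorem are matched through the substitution table. Assumption~\ref{ass:driver-new} is exactly their UE1 condition for $Y_t$, with invariant law $\zeta$. Assumption~\ref{ass:growth-new} is their A1 with $C_A=C_G(c)$ and $\beta=\beta_G$; the constant $C_G(c)=C_{G,0}+c$ absorbs the $(1+c)$ entry because $W\ge1$. Step~2 in the proof of Theorem~\ref{thm:retd-diminishing-new} already identifies the stationary mean $\E_\zeta[\bm G_c(Y_0)]=\bm G_c$ and shows it is finite. The Hurwitz requirement follows from Assumption~\ref{ass:c-theory-new} through the congruence~\eqref{eq:retd-schur-congruence-new}: $\Sym(\bm G_c)\succ0$, so $-\bm G_c$ is Hurwitz. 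Hence the Lyapunov equation~\eqref{eq:Qc-new} has the unique solution $\bm Q_c=\int_0^\infty e^{-\bm G_c^\trans s}e^{-\bm G_c s}\,\mathrm ds\succ0$, and $a_c$ is well defined.

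With constant $\alpha_k\equiv\alpha<\alpha_{D,p}^R(c)$, the quoted bound with $m=0$ gives~\eqref{eq:retd-moment-contraction-new2} directly, since $\sum_{k=1}^n\alpha_k=\alpha n$. The extra terms in~\eqref{eq:alpha-bar-new} are only further restrictions and are not needed here.

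For~\eqref{eq:retd-top-negative-new2}, I would pass from the $p$th moment to an almost-sure statement. Fix $\varepsilon>0$. Markov's inequality gives
\[
\Prob_y\bigl(\|\bm\Phi_R(n,1;c)\|_2>e^{(-a_c\alpha/4+\varepsilon)n}\bigr)\le C_{c,p}^pV(y)^{1/2}e^{-p\varepsilon n}.
\]
This bound is summable in $n$. Borel--Cantelli then gives $\limsup_n n^{-1}\log\|\bm\Phi_R\|_2\le-a_c\alpha/4+\varepsilon$ almost surely $\Prob_y$ for each $y$. Letting $\varepsilon\downarrow0$ along rationals gives the claim, and $a_c/4=1/(8\|\bm Q_c\|_2)$.

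For the affine part, write $\bm e_t=\bm z_t-\bm z_c^*$. The recursion becomes $\bm e_{t+1}=\bm A_t(c)\bm e_t+\alpha\bm\xi_t^R$, where the indexing convention follows Step~1. The candidate stationary solution is the series~\eqref{eq:stationary-series-new2}. Its convergence in $L^p$ follows from Minkowski's inequality once each term is bounded by a geometric factor times a finite constant. This step is the main obstacle, because $\bm\Phi_R(t,t-j+1;c)$ and $\bm\xi^R_{t-j}$ are dependent. My first attempt would condition on $Y_{t-j}$ and use the Markov property. The product over indices $t-j+1,\dots,t$ is driven by the chain started from $Y_{t-j}$, and the extra multiplied increment costs at most one step under the same driver. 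So the conditional $p$th moment of the product is at most $C_{c,p}^pe^{-p(a_c/4)\alpha(j-1)}V(Y_{t-j})^{1/2}$, up to replacing $V$ by $PV\lesssim V+B_0$ via UE1. Taking expectations under stationarity then leaves exactly $\E_\zeta[V^{1/2}\|\bm\xi_0^R\|_2^p]$, which is finite by Assumption~\ref{ass:residual-new}. If the split of $V^{1/2}$ with $\|\bm\xi\|^p$ does not line up exactly this way, I would fall back on a H\"older bound using the $2p$ version of the moment inequality.

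Stationarity of $\bm e_t^\circ$ follows from the two-sided shift, and it solves the recursion by direct substitution. For uniqueness and forgetting, any two solutions driven by the same samples differ by $\bm\Phi_R(t,1;c)(\bm e_1-\bm e_1')$. That difference decays exponentially, almost surely by the top-exponent bound and in $L^p$ by~\eqref{eq:retd-moment-contraction-new2}. So a second finite-moment stationary solution must coincide with $\bm e_t^\circ$. Finally, if $\bm\xi_t^R\equiv0$, then $\bm e_t=\bm\Phi_R(t-1,1;c)\bm e_1$, and the almost-sure bound gives exponential convergence of $\bm z_t$ to $\bm z_c^*$.
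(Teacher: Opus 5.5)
Your proposal is correct and follows the paper's proof essentially step for step. It uses Hurwitz via the Schur congruence together with UE1/A1 to invoke Theorem~1 of \citet{durmus2021stability}, Markov's inequality and Borel--Cantelli over rational $\varepsilon$ for the almost-sure exponent, and conditioning on $Y_{t-j}$ with Minkowski's inequality and a forgetting argument for the stationary series and its uniqueness. Your index hedge is unnecessary. Step~1's form $H(\bm z_t,Y_{t+1})$ gives $\bm e_t=\bm A_t(c)\bm e_{t-1}+\alpha\bm\xi_t^R$ (not $\bm e_{t+1}=\bm A_t(c)\bm e_t+\alpha\bm\xi_t^R$), so $\bm\Phi_R(t,t-j+1;c)$ uses exactly $Y_{t-j+1},\dots,Y_t$. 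Conditioning on $Y_{t-j}$ therefore yields $C_{c,p}^pe^{-p(a_c/4)\alpha j}V(Y_{t-j})^{1/2}$ directly, with no lost step, no $PV$ correction, and no H\"older fallback.
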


\begin{proof}
\textit{Step 1: the regularizer changes each sampled product.}
From Equation~\ref{eq:sample-G-g-new},
\begin{equation}
 \bm G_c(y)=\bm G_0(y)+c\bm e_\omega\bm e_\omega^\trans.
 \label{eq:sample-c-rank-one-new}
\end{equation}
Thus the product in the theorem depends on $c$.  On a
zero-importance-ratio sample, its auxiliary diagonal entry is
$1-\alpha(1+c)$.  The final term in
Equation~\ref{eq:alpha-bar-new} ensures that this scalar mode has magnitude
less than one.

\textit{Step 2: $c$ creates a verified mean margin.}
Assumption~\ref{ass:c-theory-new} makes $\Delta_c>0$.  The congruence in
Equation~\ref{eq:retd-schur-congruence-new} therefore proves
$\Sym(\bm G_c)\succ0$ and hence that $-\bm G_c$ is Hurwitz.  Furthermore,
Equation~\ref{eq:mc-new} gives
$\bm y^\trans\Sym(\bm G_c)\bm y\ge
\underline m_c\|\bm y\|_2^2$ for every $\bm y$.
Expanding a deterministic mean step yields
\begin{align*}
 \|(\bm I-\alpha\bm G_c)\bm y\|_2^2
 &=\|\bm y\|_2^2
   -2\alpha\bm y^\trans\Sym(\bm G_c)\bm y
   +\alpha^2\|\bm G_c\bm y\|_2^2\\
 &\le\{1-2\alpha\underline m_c+
       \alpha^2\|\bm G_c\|_2^2\}\|\bm y\|_2^2.
\end{align*}
The second term in Equation~\ref{eq:alpha-bar-new} makes this multiplier
strictly less than one.  The same Hurwitz conclusion verifies condition A2 of
the quoted random-product theorem, without assuming contraction of the sampled
product.

\textit{Step 3: $c$ enters the rate and the stochastic threshold.}
Since $-\bm G_c$ is Hurwitz,
$\bm Q_c=\int_0^\infty e^{-\bm G_c^\trans s}e^{-\bm G_cs}\,\mathrm ds$ is the
unique positive-definite solution of
Equation~\ref{eq:Qc-new}.  Hence $c$ changes $\bm Q_c$ and the decay constant
$a_c$.  Independently, Equation~\ref{eq:growth-Gc-new} shows that $c$ changes
the one-step growth constant $C_G(c)$.  Assumption~\ref{ass:driver-new}
verifies UE1, Assumption~\ref{ass:growth-new} verifies A1, and Step 2 verifies
A2 for the actual RETD matrix family.  The external theorem therefore applies
at every constant stepsize below $\alpha_{D,p}^R(c)$ and gives
Equation~\ref{eq:retd-moment-contraction-new2}.

\textit{Step 4: moment contraction gives a negative top exponent.}
Fix a rational $\varepsilon\in(0,a_c\alpha/4)$.  Markov's inequality and
Equation~\ref{eq:retd-moment-contraction-new2} give
\[
 \Prob_y\!\left(
  \|\bm\Phi_R(n,1;c)\|_2>
  e^{-(a_c\alpha/4-\varepsilon)n}\right)
 \le C_{c,p}^pV(y)^{1/2}e^{-p\varepsilon n}.
\]
The right-hand side is summable.  Borel--Cantelli implies that the displayed
event occurs only finitely often.  Taking the intersection over the countable
set of positive rational $\varepsilon<a_c\alpha/4$ and then sending
$\varepsilon\downarrow0$ proves
Equation~\ref{eq:retd-top-negative-new2} on one probability-one event.

\textit{Step 5: affine stability and uniqueness.}
Let $\chi=a_c\alpha/4$.  For $j\ge1$, condition first on $Y_{t-j}$.
The Markov property and Equation~\ref{eq:retd-moment-contraction-new2} give
\begin{align*}
 &\E_\zeta\!\left[
  \|\bm\Phi_R(t,t-j+1;c)\bm\xi_{t-j}^R\|_2^p
  \right] \notag\\
 &\quad\le
 C_{c,p}^p e^{-p\chi j}
 \E_\zeta\!\left[
 V(Y_{t-j})^{1/2}\|\bm\xi_{t-j}^R\|_2^p
 \right].
\end{align*}
The expectation on the right is finite and independent of $t,j$ by
Assumption~\ref{ass:residual-new} and stationarity.  The $j=0$ term is
$\bm\xi_t^R$ because
$\bm\Phi_R(t,t+1;c)=\bm I$, and it is also in $L^p$.  Hence
$\sum_{j=0}^{\infty}
\|\bm\Phi_R(t,t-j+1;c)\bm\xi_{t-j}^R\|_{L^p}<\infty$.
Minkowski's inequality proves that
Equation~\ref{eq:stationary-series-new2} converges in $L^p$.  Since $p\ge1$,
the same summability also gives
$\sum_j\E\|\bm\Phi_R\bm\xi^R\|_2<\infty$; Tonelli's theorem therefore gives
absolute convergence almost surely.  The limit is causal and stationary.
Separating
the $j=0$ term and using
$\bm\Phi_R(t,t-j+1;c)=
 \bm A_t(c)\bm\Phi_R(t-1,t-j+1;c)$ for $j\ge1$ shows directly that
$\bm e_t^\circ=\bm A_t(c)\bm e_{t-1}^\circ+\alpha\bm\xi_t^R$,
so it is a stationary solution of the centered affine recursion.

For any two solutions driven by the same two-sided sample path,
$\bm e_t-\widetilde{\bm e}_t=\bm\Phi_R(t,s+1;c)
(\bm e_s-\widetilde{\bm e}_s)$ for $t>s$.
Step 4 makes the norm of this difference decay with exponent at most
$-\chi$ for every finite initial difference.  If the difference is stationary,
its distribution is independent of $t$, while the displayed product makes it
converge to zero in probability; its stationary distribution must therefore
be the point mass at zero.  This proves uniqueness, while arbitrary finite
initial differences are forgotten exponentially.  If
$\bm\xi_t^R\equiv\bm0$ sample by sample, the
stationary series is identically zero and the same homogeneous contraction
gives $\bm z_t\to\bm z_c^*$ exponentially.

Finally, increasing $c$ enlarges $\Delta_c$ but also changes
$\|\bm G_c\|_2$, $\bm Q_c$, $C_G(c)$, and the scalar factor
$1-\alpha(1+c)$.  The theorem therefore yields the feasible set
$\mathcal C_{\alpha,p}^R=
\{c>c_{\min}:\alpha<\overline\alpha_p^R(c)\}
\subset(c_{\min},2/\alpha-1)$, which need not expand monotonically with $c$.
\end{proof}

\subsection{Comparison with ETD under the Constant-Stepsize Theorem}
\label{sec:etd-after-theorem}

The external theorem of \citet{durmus2021stability} is algorithm agnostic;
Theorem~\ref{thm:retd-fixed-new} is not.  ETD and RETD supply different
matrix families:
\begin{equation}
 \mathcal A_E(Y_t)=X_t\bm x_t\bm\psi_t^\trans,
 \qquad
 \mathcal A_R(Y_t;c)=
 \begin{bmatrix}
 X_t\bm x_t\bm\psi_t^\trans&\bm x_t\\
 X_t\bm\psi_t^\trans&1+c
 \end{bmatrix}.
 \label{eq:etd-retd-matrices-new}
\end{equation}
Consequently they have different dimensions, sample products, stationary mean
matrices, Lyapunov equations, growth constants, and admissible stepsize
thresholds.  ETD must therefore be analyzed with its own matrix family; it is
not a principal submatrix of the RETD recursion.

ETD satisfies the same \emph{external analytical tool} on a particular task
when its augmented trace driver verifies UE1, its entries
satisfy the corresponding growth bound, $-\bm A$ is Hurwitz, and
$\alpha$ is below the threshold obtained from
$\mathcal A_E$.  Under these conditions ETD has moment contraction and a
negative top exponent, which covers sufficiently small stepsizes in suitable
environments.

There are three distinct reasons an ETD certificate can fail.  First,
$-\bm A$ may not be Hurwitz.  Second, the unbounded
trace driver may fail the required drift or growth condition.  Third, even
when the first two requirements do hold, the specified stepsize may still lie
outside ETD's own product-stability region.  The two-state counterexample
proves directly that the third requirement fails at $\alpha=0.1$: ETD's exact
scalar exponent is positive, so no valid negative-exponent theorem can cover
that matrix product at that stepsize.  The example also has
$\E F_t^2=\infty$, and we have not supplied UE1 or A1 constants for its
augmented driver.  It would therefore be incorrect to claim that the first two
external conditions have been verified there.  The Baird stress test has the
same coverage boundary: its signs are established by the separate analysis in
Section~\ref{sec:baird}, not by
Theorem~\ref{thm:retd-fixed-new}.  RETD does not use the ETD matrices; it uses
the block matrices in
Equation~\ref{eq:etd-retd-matrices-new}, and at $c=4$ their complete cycle
product has a negative exponent.

When $\rho_t=0$, ETD supplies $\bm I$, while RETD supplies
Equation~\ref{eq:retd-zero-ratio-new}.  The
parameter $c$ therefore changes an actual post-shock response, then propagates
through $\Delta_c$, $\bm Q_c$, $C_G(c)$, and
$\overline\alpha_p^R(c)$.  These quantities determine separate stability
regions for ETD and RETD.

\section{Numerical Study of ETD and RETD on Baird's Counterexample}
\label{sec:baird}

We analyze Baird's seven-state geometry using the same quantities as in the
two-state counterexample: regenerative trace cycles, deterministic mean maps,
sampled cycle matrices, and prediction-relevant top Lyapunov exponents.

\subsection{The Counterexample and the Identifiable Prediction Space}

There are six upper states $u_1,\ldots,u_6$ and one lower state $\ell$.
From every state, the solid action moves to $\ell$, whereas the dashed action
moves uniformly to an upper state.  The target policy always selects solid.
The behavior policy selects solid with probability $1/7$ and dashed with
probability $6/7$.  Hence $\rho_t=7$ on solid, $\rho_t=0$ on dashed, and
$d_\mu(s)=1/7$.

\begin{figure}[t]
\centering
\begin{tikzpicture}[>=Latex,
  state/.style={circle,draw,minimum size=6mm,inner sep=0pt,font=\small},
  solid/.style={->,blue!70!black,semithick},
  dasharrow/.style={->,gray!80,dashed,semithick}]
  \node[state] (u1) at (-3.00,0.65) {$u_1$};
  \node[state] (u2) at (-1.80,0.65) {$u_2$};
  \node[state] (u3) at (-0.60,0.65) {$u_3$};
  \node[state] (u4) at (0.60,0.65) {$u_4$};
  \node[state] (u5) at (1.80,0.65) {$u_5$};
  \node[state] (u6) at (3.00,0.65) {$u_6$};
  \node[state] (ell) at (0,-1.05) {$\ell$};
  \foreach \upper in {u1,u2,u3,u4,u5,u6}
    \draw[solid] (\upper) -- (ell);
  \draw[solid] (ell) edge[loop below,looseness=5] (ell);
  \draw[dasharrow] (ell.west) .. controls (-2.25,-0.90) and (-2.85,0.05) .. (u1.south);
  \draw[dasharrow] (ell.east) .. controls (2.25,-0.90) and (2.85,0.05) .. (u6.south);
  \node[font=\footnotesize,align=center] at (0,1.33) {six upper states};
  \node[font=\footnotesize,align=left] at (-3.00,-1.35)
    {solid: to $\ell$\\$\pi(\mathrm{solid})=1$};
  \node[font=\footnotesize,align=left] at (2.10,-1.35)
    {dashed: uniform over $u_i$\\$\mu(\mathrm{solid})=1/7$};
\end{tikzpicture}
\caption{Baird's seven-state counterexample.  The behavior policy uses the
blue solid transition with probability $1/7$ and otherwise uses the dashed
uniform-to-upper transition.  The dashed arcs denote the common kernel from
every state.}
\label{fig:baird-mdp}
\end{figure}

We use $\gamma=0.99$, zero rewards, and the standard features
$\bm\phi(u_i)=2\bm e_i+\bm e_8$ and
$\bm\phi(\ell)=\bm e_7+2\bm e_8$.  The $7\times8$ feature matrix has rank
seven, with
\[
 \ker\bm\Phi=\operatorname{span}
 \{(-1,-1,-1,-1,-1,-1,-4,2)^\trans\}.
\]
Thus an eight-dimensional parameter product contains one prediction-null
direction.  We remove it and work in the seven-dimensional prediction
coordinate $\bm y=\bm\Phi\bm\theta$.  Because rewards are zero, the target
value is exactly $\bm0$ and both error recursions are homogeneous.

\subsection{The Common Regenerative Cycle}

A dashed action has zero importance ratio and resets the next follow-on trace
to one.  Let $L$ be the number of consecutive solid actions before the next
dashed action.  Successive cycles are independent, with
$\Pr(L=\ell)=(6/7)(1/7)^\ell$ and $\E[L+1]=7/6$.
Within a solid run,
\begin{equation}
 F_0=1,\qquad F_{k+1}=1+6.93F_k,\qquad
 F_k=\frac{6.93^{k+1}-1}{5.93}.
 \label{eq:baird-exact-trace-recurrence}
\end{equation}
Consequently, $\E[F_t^r]<\infty$ if and only if $6.93^r/7<1$.
The threshold $\log7/\log6.93\approx1.00519$ lies between one and two:
the trace has a finite mean and infinite variance.  This reproduces the shock
source in the two-state counterexample, but it still does not establish the
sign of either complete matrix product.

The effect of a long run is explicit.  On a lower-to-lower solid transition,
the scalar multiplier in the lower-prediction direction is
$d_k=1-\alpha\rho F_k(1-\gamma)\|\bm\phi(\ell)\|^2
=1-0.35\alpha F_k$.
Since $F_k\asymp6.93^k$, a length-$L$ run eventually accumulates
$\sum_{k<L}\log|d_k|=\tfrac12\log(6.93)L^2+O(L)$.  Whether the algorithm
contracts or expands depends on how the final dashed sample completes this
cycle.

\subsection{ETD: Stable Mean Dynamics, Expanding Sampled Dynamics}

The emphatic weights solve
$\bm f^\trans(\bm I-\gamma\bm P_\pi)=\bm d_\mu^\trans$ and are
$f(u_i)=1/7$ and $f(\ell)=694/7$.
In the identifiable coordinate, the ETD mean matrix has five eigenvalues
$4/7$ and two eigenvalues
$(1641\pm37\sqrt{649})/700=0.99772471,3.69084672$.  Therefore, at
$\alpha=0.01$, $\rho(\bm I-\alpha\bm A_E)=0.99428571<1$.
The deterministic ETD mean map is strictly contractive.

To analyze the samples, define $\bm K=\bm\Phi\bm\Phi^\trans$,
$\bm k_s=\bm K\bm e_s$, $\bm v_{s,s'}=\bm e_s-\gamma\bm e_{s'}$, and
$X_t=F_t\rho_t$.
The prediction error obeys
\begin{equation*}
 \bm y_{t+1}^E=\bm M_t^E\bm y_t^E,\qquad
 \bm M_t^E
 =\bm I-\alpha X_t\bm k_{S_t}
                  \bm v_{S_t,S_{t+1}}^\trans .
\end{equation*}
Let $\bm M_u^E(i,F_0)$ denote the first upper-to-lower solid factor and
$\bm M_\ell^E(F_k)$ a later lower-to-lower solid factor.  The final dashed
ETD factor is the identity because $X_t=0$.  Hence
\begin{equation}
 \bm C_E(i,0)=\bm I,\qquad
 \bm C_E(i,L)=\bm M_\ell^E(F_{L-1})\cdots
 \bm M_\ell^E(F_1)\bm M_u^E(i,F_0),\quad L\ge1 .
 \label{eq:baird-etd-cycle}
\end{equation}
The prediction-relevant exponent is
\begin{equation*}
 \lambda_E^{\mathrm{pred}}
 =\lim_{N\to\infty}
 \frac{\log\|\bm C_{E,N}\cdots\bm C_{E,1}\|}
      {\sum_{n=1}^N(L_n+1)} .
\end{equation*}
The geometric cycle law above and the quadratic
log-growth bound above give the logarithmic moments required for this limit.

At $\alpha=0.01$, a $10{,}000$-transition burn-in followed by $200{,}000$
retained transitions in each of ten independent runs gives
$\widehat\lambda_E^{\mathrm{pred}}=0.00138528\pm0.00008328$,
where the uncertainty is the sample standard deviation across runs; all ten
estimates are positive.  This is an empirical estimate and provides no
analytic lower bound.  The mechanism is the
same as in Section~\ref{sec:two-state-etd-product}:
long solid runs create large factors, and the subsequent zero-ratio ETD
factor is exactly $\bm I$, so it cannot release a post-shock correction.

\subsection{RETD: The Same Cycle Includes Recovery}

For RETD the joint prediction error is
$[(\bm y_t^R)^\trans,\omega_t]^\trans$, with sample matrix
\begin{equation*}
 \bm M_t^R(c)=\bm I-\alpha
 \begin{bmatrix}
  X_t\bm k_{S_t}\bm v_{S_t,S_{t+1}}^\trans & \bm k_{S_t}\\
  X_t\bm v_{S_t,S_{t+1}}^\trans & 1+c
 \end{bmatrix}.
\end{equation*}
Its Baird mean matrix satisfies $\Sym(\bm G_c)\succ0$ if and only if
$c>29403/109397$, so $c=0.5$ is inside the certified region.  At
$(\alpha,c)=(0.01,0.5)$,
\[
 \min\operatorname{Re}\spec(\bm G_c)=0.24794122,
 \qquad \rho(\bm I-\alpha\bm G_c)=0.99752059<1.
\]
The RETD mean map is stable, but it is slower than the ETD mean map reported
above.

On the final dashed sample, RETD applies
\begin{equation*}
 \bm D_s(c)=\bm I-\alpha
 \begin{bmatrix}
  \bm0&\bm k_s\\
  \bm0^\trans&1+c
 \end{bmatrix},
\end{equation*}
which is not the identity.  Its complete cycles are
\begin{align*}
 \bm C_R(i,0;c)&=\bm D_{u_i}(c),\notag\\
 \bm C_R(i,L;c)&=\bm D_\ell(c)
 \bm M_\ell^R(F_{L-1};c)\cdots
 \bm M_\ell^R(F_1;c)\bm M_u^R(i,F_0;c),\quad L\ge1 .
\end{align*}
The only structural change relative to Equation~\ref{eq:baird-etd-cycle} is
precisely the change identified in the two-state counterexample: RETD uses the
zero-ratio sample to release the stored correction.

The same numerical setting gives
$\widehat\lambda_R^{\mathrm{pred}}=-0.00143906\pm0.00006106$,
and all ten estimates are negative.  Here the negative sign also admits a
rigorous cycle certificate.

\begin{proposition}[Certified Baird RETD contraction]
\label{cor:baird-retd-certified-contraction}
For Baird RETD in the identifiable joint prediction coordinate at
$\alpha=0.01$ and $c=0.5$,
\begin{equation*}
 \lambda_R^{\mathrm{pred}}\le-10^{-4}<0 .
\end{equation*}
\end{proposition}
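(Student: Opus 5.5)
The plan is to reuse the regenerative-cycle certificate from Section~\ref{sec:two-state-retd}, now in the eight-dimensional joint prediction coordinate $[\bm y^\trans,\omega]^\trans$. The cycle structure is the same. Cycles are iid, with $\Pr(L=\ell)=(6/7)(1/7)^\ell$ and $\E[L+1]=7/6$, and within each cycle the starting upper state $u_i$ is uniform and independent of $L$. The dashed action that ends a cycle resets the trace to $F_0=1$. It also sends the chain to a uniform upper state, so the next cycle's first factor is either $\bm M_u^R(i,F_0;c)$ (a solid step from $u_i$) or a dashed step again. Consequently, every complete cycle matrix $\bm C_R(i,L;c)$ is a fixed rational matrix indexed by $(i,L)$. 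By the symmetry of the six upper states under coordinate permutation, I will choose the norm to be permutation invariant over the $u_i$. That leaves only one representative $i$ per $L$.

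\textbf{Finite prefix.} I will fix a similarity matrix $\bm S$ and work in $\|\cdot\|_S=\|\bm S\cdot\bm S^{-1}\|_2$. A natural candidate is a Cholesky factor of the Lyapunov solution $\bm Q_c$ for the mean matrix $\bm G_c$. Another is a block matrix modeled on the two-state $\bm S$, coupling the $\omega$ coordinate to $\bm y$ so that the delayed correction released by $\bm D_s(c)$ registers as contraction. For $L=0,1,\dots,L_0$ with a moderate $L_0$ (perhaps $8$--$10$, since $1/7$ decays fast), I will compute the rational matrices $\bm C_R(i,L;c)$ exactly. I will then bound their $S$-norms by outward-rounded interval arithmetic on the largest eigenvalue of the Gram matrix, and sum $\Pr(L)\,\tfrac16\sum_i\log\|\bm C_R(i,L;c)\|_S$. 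The $L=0$ term carries probability $6/7$ and is $\log\|\bm D_{u_i}(c)\|_S$. It must be sufficiently negative, or only slightly positive and then compensated by $L=1,2$. This is where $\bm S$ has to be tuned.

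\textbf{Tail.} For $L>L_0$ I will use crude factor bounds, as in the two-state case. These are $\|\bm M_\ell^R(F_k;c)\|_2\le a+bF_k$ with $F_k\le 6.93^{k+1}/5.93$, together with bounds on $\|\bm D_\ell\|_2$, $\|\bm M^R_u\|_2$ and $\kappa(\bm S)$. They give $\log\|\bm C_R\|_S\le e_0+e_1L+\tfrac12\log(6.93)L^2$. Against the weights $(1/7)^L$ the tail sum is explicitly tiny. The prefix plus tail will then give $\E\log\|\bm C_R\|_S<-\tfrac{7}{6}10^{-4}$.

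\textbf{Passing to the full trajectory.} Dividing by $\E[L+1]=7/6$, submultiplicativity and the strong law give the cycle-endpoint bound $\le-10^{-4}$, as in Equation~\ref{eq:two-state-retd-boundary-new}. The unfinished cycle is handled as before: partial products satisfy $\log^+\le d_0+d_1L^2$, Borel--Cantelli gives $L_n^2/n\to0$, and the contribution is $o(t)$. Norm equivalence removes $\bm S$ from the limit.

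\textbf{Main obstacle.} The hard part is finding $\bm S$. The ETD-like part of each cycle is expanding, since the numerical exponent is only $\approx-0.0014$, so the margin is thin and a naive norm such as $\bm S=\bm I$ will likely make the $L=0$ dashed factor have norm above one. The first attempt will be $\bm S=\bm Q_c^{1/2}$ from the RETD mean matrix, followed by a small numerical optimization over a structured family respecting the $u_i$ symmetry. If a single-cycle certificate still fails, the fallback is to certify two-cycle blocks $\bm C_{R,2}\bm C_{R,1}$, which sharpens the submultiplicative bound.
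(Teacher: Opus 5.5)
\paragraph{The single-cycle norm certificate cannot exist.}
Your certificate fails on Baird for every choice of $\bm S$, because every cycle matrix has a fixed vector. A sample factor $\bm M_t^R(c)$ fixes each $(\bm y,\omega)$ with $\omega=0$ and $\bm v_{S_t,S_{t+1}}^\trans\bm y=0$. Apply this to each factor of a cycle:
\begin{itemize}
\item The dashed factor $\bm D_s(c)$ fixes the whole hyperplane $\{\omega=0\}$.
\item For $L\ge1$, the factor $\bm M_u^R(i,F_0;c)$ has $\bm v=\bm e_{u_i}-\gamma\bm e_\ell$, the factor $\bm M_\ell^R(F_k;c)$ has $\bm v=(1-\gamma)\bm e_\ell$, and $\bm D_\ell(c)$ fixes $\{\omega=0\}$. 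All three therefore fix the five-dimensional subspace $V_i=\{\omega=0,\ y_\ell=0,\ y_{u_i}=0\}$.
\end{itemize}
Hence $\|\bm C_R(i,L;c)\|\ge\rho(\bm C_R(i,L;c))\ge1$ in every operator norm. Every term $\Pr(L)\,\tfrac16\sum_i\log\|\bm C_R(i,L;c)\|_S$ in your prefix is nonnegative, so $\E\log\|\bm C_R\|_S\ge0$ however $\bm S$ is tuned.

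Your two-cycle fallback fails for the same reason: $\bm C_R(j,L_2;c)\bm C_R(i,L_1;c)$ fixes $V_i\cap V_j\ne\{0\}$. More generally, a block of cycles has a fixed vector whenever some upper state fails to start a nonempty cycle within it. Covering all six states typically takes on the order of a hundred cycles, far beyond exact enumeration. The two-state certificate worked only because there the $L\ge1$ cycle matrices have no fixed vector. The six interchangeable upper states are what break the transfer.

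\paragraph{The missing idea: a projective drift.}
You need to bound $\sup_{\bm u\ne0}\E[\log(\|\bm C_R\bm u\|/\|\bm u\|)]$ rather than $\E\sup_{\bm u}$. The subspaces $V_i$ move with the uniformly random starting state and have trivial common intersection, so a direction fixed by one cycle is moved by others.

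The paper exploits this with a permutation-invariant quadratic form $\bm P$, in three steps:
\begin{enumerate}
\item For each direction it applies concavity of $\log$ over the uniform $i$ inside the logarithm. This produces $\overline{\bm Q}_L=\tfrac16\sum_i\bm C_R(i,L;1/2)^\trans\bm P\bm C_R(i,L;1/2)$.
\item It linearizes with $\log r\le r-1$ around the gains $\beta_L$ of a reference direction $\bm v_*$, so that different run lengths compensate one another direction by direction.
\item The single matrix inequality $(m_{11}+10^{-10})\bm P\succ\bm S_{11}$ (the paper's $\bm S_{11}$, not your similarity matrix) then certifies a direction-uniform negative drift.
\end{enumerate}

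Because this is a projective drift rather than a submultiplicative norm bound, it needs ingredients absent from your plan:
\begin{itemize}
\item Nonsingularity of every factor, so that cycle log gains are bounded below by $-(a+bL^2)$. The paper checks this with the determinant lemma, using $F_k\in\mathbb Z[1/10]$ while $394000/1393\notin\mathbb Z[1/10]$.
\item A martingale strong law for the cycle log gains along each fixed direction.
\item A finite $1/2$-net of the $\bm P$-unit sphere to pass from directions to the operator norm.
\end{itemize}
Your tail estimate and unfinished-cycle argument carry over essentially unchanged.
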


\begin{proof}
The dashed times generate iid cycles with probabilities
$p_L=(6/7)(1/7)^L$ and mean length $\E[L+1]=7/6$.  We first verify that no
logarithm below is applied to zero.  The matrix determinant lemma gives
\begin{equation*}
 \det\bm M_t^R(c)
 =(1-\alpha c)
   \left(1-\alpha X_t\bm v_{S_t,S_{t+1}}^\trans\bm k_{S_t}\right)-\alpha .
\end{equation*}
At $(\alpha,c,\gamma)=(1/100,1/2,99/100)$, a dashed factor has determinant
$197/200$.  An upper-to-lower solid factor has determinant
$774657/1000000$.  For a lower-to-lower solid factor,
$\bm v_{\ell,\ell}^\trans\bm k_\ell=1/20$ and $X=7F_k$, so
\begin{equation*}
 \det\bm M_\ell^R(F_k;1/2)
 =\frac{394000-1393F_k}{400000}.
\end{equation*}
It could vanish only if $F_k=394000/1393$.  The fraction is reduced and
$1393=7\cdot199$, whereas the recurrence in
Equation~\ref{eq:baird-exact-trace-recurrence} gives
$F_k\in\mathbb Z[1/10]$, whose reduced denominator contains only the primes
$2$ and $5$.  Hence every sample and cycle matrix is nonsingular.

We next control every projective direction, rather than a finite set of
sampled initial vectors.  Use the following positive-definite matrix $\bm P$,
invariant under permutations of the six upper states, whose independent entry
classes are
\[
 \begin{aligned}
  P_{ii}&=0.7473190564759538\quad(i\le6),
  &P_{ij}&=0.13894231222775422\quad(i\ne j\le6),\\[-1mm]
  P_{i\ell}&=-1.4462721104545542,
  &P_{i\omega}&=1.3688386146118432,\\[-1mm]
  P_{\ell\ell}&=9.245683770562014,
  &P_{\ell\omega}&=-9.008824250903285,\\[-1mm]
  P_{\omega\omega}&=11.073004906486352.
 \end{aligned}
\]
Every displayed decimal is interpreted as a rational number.  Exact
symmetry-reduced Sylvester tests give
\begin{equation}
 \frac3{50}\bm I\prec\bm P\prec21\bm I .
 \label{eq:baird-P-bounds}
\end{equation}
For each run length define
\begin{align*}
 \overline{\bm Q}_L
 &=\frac16\sum_{i=1}^6
   \bm C_R(i,L;1/2)^\trans\bm P\bm C_R(i,L;1/2),
 \\
 \bm v_*
 &=\bigl(0.3772374\,\bm1_6^\trans,
          0.38224657,0.00626418\bigr)^\trans,
 \qquad
 \beta_L=\frac{\bm v_*^\trans\overline{\bm Q}_L\bm v_*}
                  {\bm v_*^\trans\bm P\bm v_*}.
\end{align*}
All displayed decimals are again interpreted as exact rationals.  Put
\begin{equation*}
 m_{11}=\sum_{L=0}^{11}p_L
 =\frac{13841287200}{13841287201},
 \qquad
 \bm S_{11}=\sum_{L=0}^{11}p_L
                 \frac{\overline{\bm Q}_L}{\beta_L}.
\end{equation*}
The read-only verifier performs symmetry-reduced exact Sylvester tests and
certifies
\begin{equation}
 \left(m_{11}+10^{-10}\right)\bm P-\bm S_{11}\succ0.
 \label{eq:baird-generalized-eigenvalue-new}
\end{equation}

We spell out how this matrix inequality becomes a direction-uniform log
drift.  For a nonzero current direction $\bm u$, first average over the next
upper state $i$, which is uniform and independent of the past.  Jensen's
inequality gives
\begin{align}
 &\frac16\sum_{i=1}^6
 \log\frac{\|\bm C_R(i,L;1/2)\bm u\|_{\bm P}}
               {\|\bm u\|_{\bm P}}
 \notag\\
 &\qquad\le\frac12\log
 \frac{\bm u^\trans\overline{\bm Q}_L\bm u}
      {\bm u^\trans\bm P\bm u},
 \qquad \|\bm u\|_{\bm P}=(\bm u^\trans\bm P\bm u)^{1/2}.
 \label{eq:baird-jensen-direction-new}
\end{align}
Write the ratio on the right as $\beta_L r_L(\bm u)$ and use
$\log r\le r-1$.  Summing over $0\le L\le11$ and applying
Equation~\ref{eq:baird-generalized-eigenvalue-new} yields, uniformly in
$\bm u\ne0$, the following bound, where $g_{0:11}(\bm u)$ denotes the
$p_L$-weighted sum of the left-hand side of
Equation~\ref{eq:baird-jensen-direction-new}:
\begin{equation*}
 g_{0:11}(\bm u)
 \le\frac12\left\{
 \sum_{L=0}^{11}p_L(\overline{\log\beta_L}-1)
 +m_{11}+10^{-10}\right\},
\end{equation*}
where $\overline{\log\beta_L}$ is the outward rational upper bound obtained
from the 48-term logarithm enclosure used by the verifier.  Exact rational
evaluation gives
\begin{equation}
 g_{0:11}:=\sup_{\bm u\ne0}g_{0:11}(\bm u)
 \le-0.000211197432303078308 .
 \label{eq:baird-finite-drift}
\end{equation}
It remains to bound the unenumerated event explicitly.  Put $N=12$.  The
geometric law gives
\begin{align}
 \Pr(L\ge N)&=7^{-N},\notag\\
 \sum_{L=N}^{\infty}p_LL
 &=7^{-N}\left(N+\frac16\right),\notag\\
 \sum_{L=N}^{\infty}p_LL(L-1)
 &=7^{-N}\left\{N(N-1)+\frac N3+\frac1{18}\right\}.
 \label{eq:baird-tail-moments-new}
\end{align}
Using $F_k\le(693/593)(693/100)^k$, the factor norm bound
$\|\bm M_k^R\|_2\le(357/200)F_k$, and
Equation~\ref{eq:baird-P-bounds}, every direction has the tail majorant
\begin{align*}
 g_{12:\infty}(\bm u)
 &\le\sum_{L=N}^{\infty}p_L\left\{
 (L+1)\log\frac{357}{200}
 +L\log\frac{693}{593}\right.\notag\\
 &\hspace{34mm}\left.
 +\frac{L(L-1)}2\log\frac{693}{100}
 +\frac12\log 350\right\}.
\end{align*}
Substitution of the three exact tail sums in
Equation~\ref{eq:baird-tail-moments-new}, followed by the same outward
48-term logarithm enclosures, gives
\begin{equation*}
 g_{12:\infty}:=\sup_{\bm u\ne0}g_{12:\infty}(\bm u)
 \le1.04142204840942604\times10^{-8}.
\end{equation*}
With $\varepsilon=10^{-4}$ and $\E[L+1]=7/6$,
\begin{equation}
 g_{0:11}+g_{12:\infty}+\varepsilon\frac76
 \le-9.4520351415927536\times10^{-5}<0 .
 \label{eq:baird-total-drift}
\end{equation}
It remains to pass from a one-cycle conditional drift to the top physical-time
exponent.  Let $\Delta_n$ be the log $\bm P$-norm gain of cycle $n$ along the
direction produced by the preceding cycles, and let $\tau_n=L_n+1$.
Equations~\ref{eq:baird-finite-drift}--\ref{eq:baird-total-drift} imply
$\E[\Delta_n+\varepsilon\tau_n\mid\mathcal F_{n-1}]\le-\eta$
for a constant $\eta>0$, uniformly over the incoming nonzero direction.  The
determinant calculation above and the factor bounds give
$|\Delta_n|\le a+bL_n^2$ for finite constants $a,b$.  Since $L_n$ is
geometric, the associated martingale differences are square integrable, and
the martingale strong law gives
\[
 \limsup_{N\to\infty}
 \frac{\sum_{n=1}^N\Delta_n}{\sum_{n=1}^N\tau_n}
 \le-\varepsilon\qquad\text{a.s.}
\]
for every fixed initial direction.

Choose a fixed $1/2$-net of the $\bm P$-unit sphere.  It is finite, so the
preceding probability-one events may be intersected, and the standard net
bound makes the operator $\bm P$-norm at most twice the largest norm on the
net.  Norm equivalence from Equation~\ref{eq:baird-P-bounds} leaves the
exponent unchanged.  Finally, a partial cycle has positive log norm at most
$a'+b'L_n^2$; the geometric tail and Borel--Cantelli give
$L_n^2/n\to0$, so the unfinished cycle contributes $o(t)$.  The endpoint
bound therefore holds at every transition and proves
$\lambda_R^{\mathrm{pred}}\le-10^{-4}$.
\end{proof}

\subsection{The Sign Comparison}

\begin{table}[t]
\centering
\footnotesize
\setlength{\tabcolsep}{4pt}
\begin{tabular}{lccc}
\toprule
Method & Mean-map radius & Top-exponent estimate & Rigorous sign \\
\midrule
ETD & $0.99428571$ & $0.00138528\pm0.00008328$ & numerical only \\
RETD ($c=0.5$) & $0.99752059$ & $-0.00143906\pm0.00006106$ & $\lambda_R\le-10^{-4}$ \\
\bottomrule
\end{tabular}
\caption{Baird comparison at $\alpha=0.01$.  Mean radii are deterministic.
Exponent estimates are means plus/minus sample standard deviations over ten
independent runs.  The RETD negative sign has the uniform cycle certificate
in Proposition~\ref{cor:baird-retd-certified-contraction}; the ETD positive
sign remains numerical.}
\label{tab:baird-sign-comparison}
\end{table}

Both mean maps contract, and RETD's mean map is the slower one.  The sampled
sign reversal is caused by the random-product dynamics; the mean dynamics do
not explain it.  The difference comes from the final zero-ratio factor: ETD applies the identity after a
trace shock, whereas RETD applies $\bm D_s(c)$ and releases a delayed
correction.  Section~\ref{sec:preliminaries} supplies the exact positive-ETD
existence proof; this Baird analysis shows that the same mechanism persists in
the canonical multidimensional counterexample, with the stated distinction
between numerical ETD evidence and certified RETD contraction.

\FloatBarrier

\section{Experimental Study}
\label{sec:experiments}

The experiments evaluate four questions: \emph{Q1}, whether the new two-state
and Baird counterexamples show the predicted finite-horizon ETD expansion and
RETD contraction while other off-policy algorithms can learn on the same
trajectories; \emph{Q2}, how the RETD pathwise and mean-stability regions change
with leakage $c$, behavior mismatch, and discount; \emph{Q3}, whether the raw
RETD equilibrium shift can be removed from finite-sample outputs to recover
the ETD fixed prediction; and \emph{Q4}, how RETD behaves away from the
counterexamples on Boyan's chain and three random-walk feature representations.

For every learning curve in Q1, Q3, and Q4, the reported mean and sample
standard deviation are computed from $10{,}000$ independent runs.  Within an
environment and seed, all algorithms consume the same state--action
trajectory.  Hyperparameters for the contextual baselines are selected on ten
independent validation seeds and then frozen before the $10{,}000$ disjoint
test seeds are evaluated.  Saved seed-level outcomes, selected configurations,
aggregate curves, and checksums accompany the implementation.

Q2 estimates Lyapunov signs from long normalized random-matrix products over
parameter grids.  Its seed counts refer to independent long-product estimators
and are reported separately from the $10{,}000$ learning-curve replications.

\begin{figure}[!b]
\centering
\includegraphics[width=.98\linewidth]{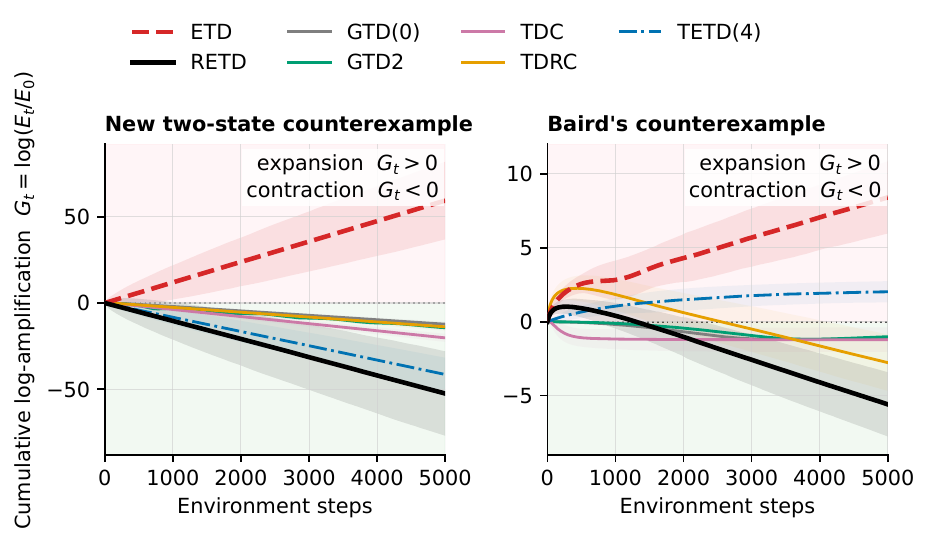}
\caption{Training curves on the two counterexamples over $10{,}000$
independent paired runs.  Curves are mean cumulative log-amplification $G_t$;
bands are one sample standard deviation.  The green half-plane $G_t<0$ means
contraction and the red half-plane $G_t>0$ means expansion.  Both axes are
linear.  ETD and RETD use the analyzed settings; the remaining methods use
configurations selected on independent validation trajectories.}
\label{fig:counterexample-baseline-curves-new}
\end{figure}

\subsection{Evaluation Metrics}

For the zero-reward counterexamples, let $E_t$ denote RMS value error over all
states.  We report $\widehat\lambda_t=t^{-1}\log(E_t/E_0)$ and
$G_t=t\widehat\lambda_t=\log(E_t/E_0)$.  Thus $G_t<0$ means contraction and
$G_t>0$ means expansion; more negative is
better.  Plotting $G_t$ preserves the sign and makes different contraction
rates visible on linear axes.  For the nonzero-reward tasks in Q3 and Q4, the
metric is RMS value error, for which lower is better.  Every shaded band is
one sample standard deviation across independent runs, and no temporal
smoothing or logarithmic axis is used.

\subsection{Q1: Finite-Horizon Counterexample Validation}

\emph{Settings and Sign Predictions.}

The new two-state experiment uses the construction in
Sections~\ref{sec:two-state-task} and~\ref{sec:two-state-retd}:
$p=.3$, $\gamma=.9$, $\phi(s_1)=1$, $\phi(s_2)=2$, zero reward, and
$\theta_0=1$.  The analyzed ETD point is $\alpha=.1$; RETD uses the same
stepsize, $c=4$, and $\omega_0=0$.  At $t=5{,}000$, the $10{,}000$ runs give
$\widehat\lambda_{5000}^{\rm ETD}=.011878\pm.004505$ and
$\widehat\lambda_{5000}^{\rm RETD}=-.010495\pm.004903$.
The exponent is positive in $99.90\%$ of ETD runs but only $2.23\%$ of RETD
runs.  This is the finite-horizon counterpart of the exact sign calculation,
not an outlier-driven difference.

For Baird's seven-state construction in Section~\ref{sec:baird}, ETD uses
$\alpha=.01$ and RETD uses $(\alpha,c)=(.01,.5)$.  The corresponding estimates
are $\widehat\lambda_{5000}^{\rm ETD}=.001682\pm.000488$ and
$\widehat\lambda_{5000}^{\rm RETD}=-.001118\pm.000435$,
with positive signs in $99.99\%$ and $1.84\%$ of runs, respectively.  The
negative RETD limiting sign is covered by the certificate in
Section~\ref{sec:baird}; the positive ETD sign remains a finite-horizon
numerical result.

\emph{Training-Curve Comparison with Off-Policy Baselines.}

Figure~\ref{fig:counterexample-baseline-curves-new} adds GTD(0), GTD2, TDC,
TDRC, and TETD(4) to the same paired test trajectories.  ETD and RETD retain
the settings fixed by the counterexample analyses.  The other methods use
configurations selected on independent validation trajectories; their update
scales differ, so forcing a common numerical stepsize would not be a fair
comparison.

At $t=5{,}000$ on the new two-state counterexample, mean $G_t$ is $59.39$ for
ETD, $-52.47$ for RETD, and $-41.43$ for the next-fastest TETD(4).  On Baird's
counterexample, the corresponding values are $8.41$ for ETD, $-5.59$ for
RETD, $-2.77$ for TDRC, and $2.02$ for TETD(4).  RETD is therefore the fastest
contracting method at the analyzed points in both environments.  The
contraction of the other methods shows that both tasks are learnable.  ETD's
positive curves arise from its sampled update product.

\subsection{Q2: Sensitivity to \texorpdfstring{$c$}{c}, Behavior Mismatch, and Discount}

\begin{figure}[!b]
\centering
\setlength{\abovecaptionskip}{4pt}
\includegraphics[width=.98\linewidth]{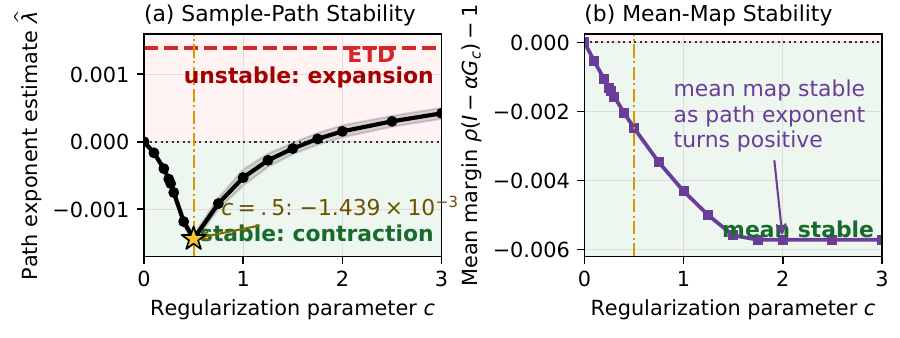}
\caption{Baird regularization path on linear axes.  Panel (a) reports the
sample-path exponent and panel (b) the deterministic mean-map margin;
negative values denote stability.}
\label{fig:baird-c-path-new}
\end{figure}

\emph{Why $c$ Cannot Be Chosen from Mean Stability Alone.}

The first scan holds Baird's environment and $\alpha=.01$ fixed.  The full numerical
scan evaluates 23 values from $c=0$ to $150$ using ten independent normalized
matrix-product estimates, each with 10,000 burn-in and 200,000 retained
matrices.  Figure~\ref{fig:baird-c-path-new} displays the 16 values in
$[0,3]$, where the contraction--expansion transition occurs; every evaluated
point with $c\ge4$ remains on the positive side of the pathwise boundary and
is retained in the released result file.  The band is one sample standard
deviation over the ten estimates, and the star marks $c=.5$, whose negative
sign also has the rigorous cycle certificate.

The upper panel falls from the neutral $c=0$ point to
$-.001439\pm.000061$ at $c=.5$, then crosses zero between $c=1.5$ and
$c=1.75$.  Hence neither zero leakage nor arbitrarily large leakage is
appropriate.  The lower panel stays below zero over the displayed range.  The
two diagnostics have different signs:
$\rho(I-\alpha G_c)<1\not\Rightarrow\lambda_{\rm top}<0$.  Thus $c$ changes
the sampled feedback dynamics even when the deterministic
mean matrix remains positive stable.

\FloatBarrier

\begin{figure}[!b]
\centering
\includegraphics[width=.98\linewidth]{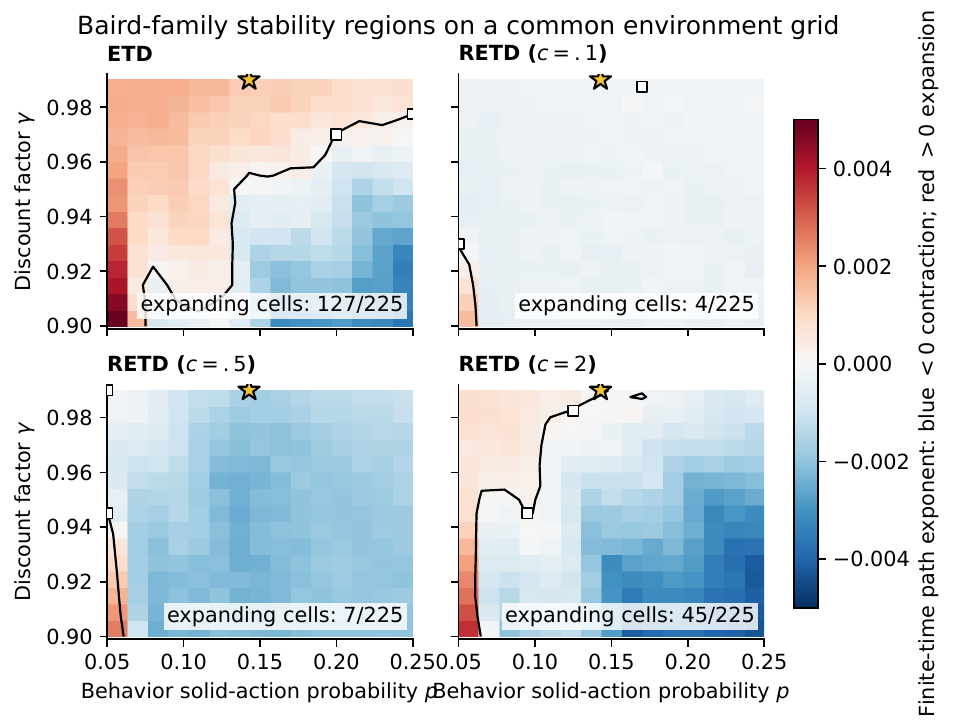}
\caption{Finite-time path-exponent map over one common Baird-family grid.
Blue cells have negative exponents and contract; red cells have positive
exponents and expand.  Black curves mark the zero boundary, stars mark the
canonical Baird point, and open squares are independent boundary checks.  The
in-panel count reports expanding cells out of 225.  All mean systems on this
grid are positive stable, so the color changes show changes in the sample-path
exponent.}
\label{fig:baird-phase-new}
\end{figure}

\emph{Stability over a Family of Baird Environments.}

The second scan varies the behavior solid-action probability $p$ and discount factor
$\gamma$ while holding the feature geometry, target policy, reward, reset
mechanism, and $\alpha=.01$ fixed.  The common $15\times15$ grid covers
$p\in[.05,.25]$ and $\gamma\in[.90,.99]$.  Each cell is screened by ten
independent long-product estimates; open squares identify boundary cells
rerun with 30 new long-product estimates.  These counts characterize a
random-matrix diagnostic, not repeated learning curves.

ETD expands in 127 of 225 cells.  RETD expands in 4, 7, and 45 cells at
$c=.1,.5,$ and $2$, respectively.  The increase from 7 expanding cells at
$c=.5$ to 45 at $c=2$ shows that the stable region is not monotone in $c$.
Only the canonical RETD$(.5)$ point is covered by the rigorous certificate;
the regional map is numerical and provides no uniform guarantee over the
grid.
\FloatBarrier

\subsection{Q3: Recovery of the ETD Fixed Prediction}
\label{sec:experiment-recovery-new}

Zero reward isolates homogeneous stability but cannot test the affine
solution identities.

\begin{figure}[!b]
\centering
\includegraphics[width=.98\linewidth]{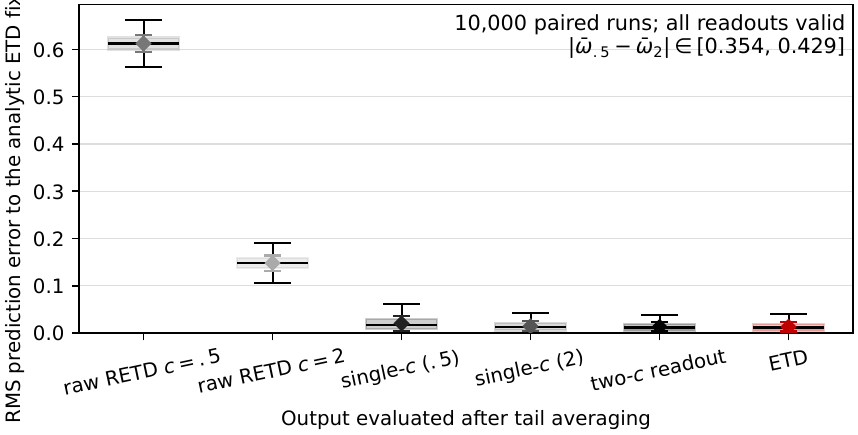}
\caption{Absolute prediction error after tail averaging over $10{,}000$
paired runs.  Points are means and bars are one sample standard deviation;
lower is better.  Raw RETD retains the predicted $c$-dependent shift, whereas
the single-$c$ and two-$c$ readouts return to the ETD error scale.  The
ordinate is linear.}
\label{fig:fixed-point-recovery-new}
\end{figure}

We use a separate bounded-trace, nonzero-reward
two-state task solely to evaluate fixed-point recovery.  Its transition
matrices are
\[
 P_{\rm left}=\begin{bmatrix}1&0\\1&0\end{bmatrix},
 \qquad
 P_{\rm right}=\begin{bmatrix}0&1\\0&1\end{bmatrix}.
\]
The behavior policy chooses right with probability $3/4$, the target always
chooses right, $\gamma=1/2$, and $\phi=(1,3/5)^\top$.  Reward is one precisely
when the current state is $s_2$ and action right is selected, and zero
otherwise.  Hence $d_\mu=(1/4,3/4)^\top$, $\rho_{\rm right}=4/3$,
$\rho_{\rm left}=0$, and $\gamma\rho_t\le2/3$, so the follow-on trace is
bounded.

Direct evaluation gives
\[
 \begin{aligned}
 A&=\frac{49}{100},\quad q=\frac{7}{10},\quad b=\frac{21}{20},
 \quad \beta=\frac74,\\
 \theta_E&=\frac{15}{7},\quad u=A^{-1}q=\frac{10}{7},
 \quad q^\top u=1,\quad m_E=\frac14.
 \end{aligned}
\]
The raw RETD equilibria are $(\theta,\omega)=(10/7,1/2)$ at $c=.5$ and
$(55/28,1/8)$ at $c=2$.  Thus the raw shift is nonzero and the ETD target is
known analytically.

We run $10{,}000$ paired trajectories for 20,000 transitions from zero, using
$\alpha_t=5/(t+100)$ and averaging the last 4,000 iterates.  The single-$c$
output applies $\theta_E=\theta+u\omega$; the two-$c$ output combines two RETD
learners using the identity in Section~\ref{sec:retd-fixed-point} and does not
use $u$.

The mean errors are $.61241\pm.01827$ and $.14789\pm.01608$ for raw RETD at
$c=.5$ and $c=2$.  Single-$c$ recovery reduces them to
$.02028\pm.01526$ and $.01435\pm.01082$; two-$c$ recovery gives
$.01305\pm.00984$, compared with $.01330\pm.01013$ for ETD.  No two-$c$
denominator is invalid: its absolute value ranges from $.3539$ to $.4293$.
The recovered errors agree with the ETD fixed-point identity.  Two-$c$ recovery
maintains two learners and becomes ill conditioned when their empirical
auxiliary states are too close.
\FloatBarrier

\subsection{Q4: Additional Prediction Tasks}

Q4 uses Boyan's chain and an off-policy random walk with tabular, inverted, and
linearly dependent features.  These tasks evaluate the dependence of
finite-sample performance on the task and feature representation outside the
two instability examples.

\begin{figure}[!b]
\centering
\includegraphics[width=.98\linewidth]{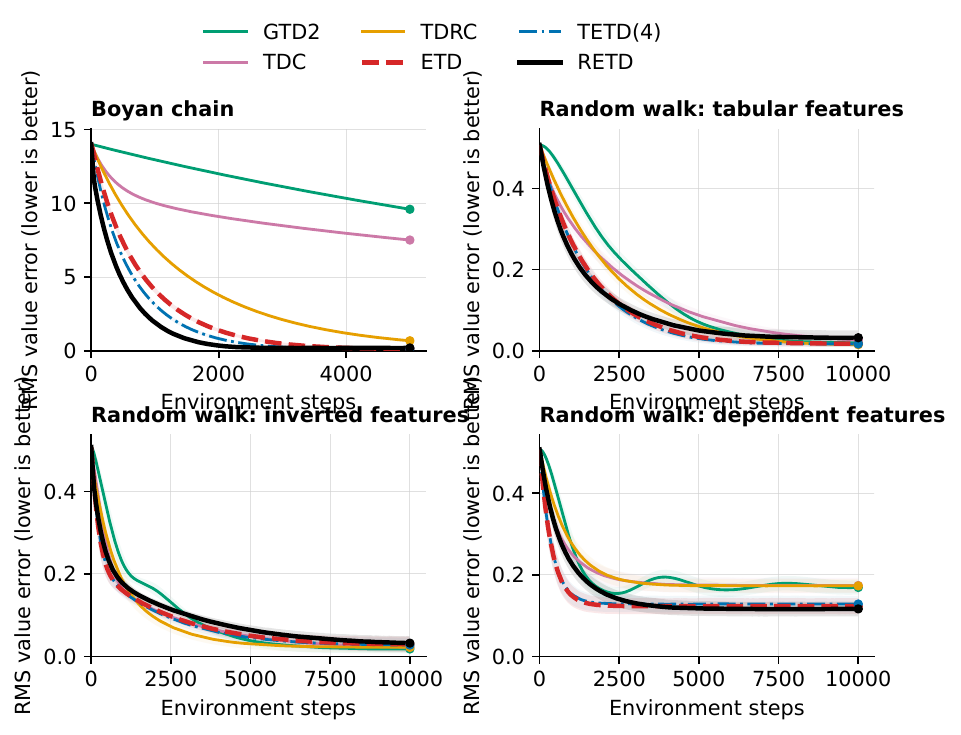}
\caption{Learning curves on Boyan's chain and three representations of the
off-policy random walk.  Curves are mean RMS value error over $10{,}000$
independent held-out runs; bands are one sample standard deviation.  Lower is
better, both axes are linear, and no temporal smoothing is applied.}
\label{fig:additional-prediction-features-new}
\end{figure}

The 13-state Boyan chain \citep{boyan1999least} has states
$0,\ldots,12$.  From any $s\ge2$, the behavior process moves to $s-2$ with
probability $.4$ and reward $1$, or to $s-1$ with probability $.6$ and reward
$2$; the target probabilities are $.5$ and $.5$, so the two importance ratios
are $1.25$ and $5/6$.  State $1$ moves deterministically to state $0$ with
reward $1.5$.  From state $0$, a new state is sampled uniformly from all 13
states with reward $1.5$ and ratio one.  We use $\gamma=.9$ and
$\theta_0=\bm 1_4$.  Let $e_j$ be the $j$th coordinate vector in
$\mathbb R^4$.  For $s=4j+r$, $j\in\{0,1,2\}$ and
$r\in\{0,1,2,3\}$, $\phi(s)=(1-r/4)e_{j+1}+(r/4)e_{j+2}$ and
$\phi(12)=e_4$.

The random walk has five nonterminal states $0,\ldots,4$, starts each episode
at state $2$, and moves left or right with behavior probabilities
$(.5,.5)$ and target probabilities $(.4,.6)$.  Thus the importance ratios are
$.8$ and $1.2$.  Crossing the left boundary gives reward zero, crossing the
 right boundary gives reward one, and either event restarts the episode at
 state $2$ with the follow-on trace reset to one.  Here $\gamma=.9$ and
$\theta_0=0$.  The three feature matrices, with states indexing rows, are
$\Phi_{\rm tab}=I_5$, $\Phi_{\rm inv}=\tfrac12(\bm1\bm1^\top-I_5)$, and
$\Phi_{\rm dep}=\left[\begin{smallmatrix}
1&0&0\\
1/\sqrt2&1/\sqrt2&0\\
1/\sqrt3&1/\sqrt3&1/\sqrt3\\
0&1/\sqrt2&1/\sqrt2\\
0&0&1
\end{smallmatrix}\right]$, respectively.
Boyan's chain is run for 5,000 transitions and each random-walk variant for
10,000 transitions.

We compare GTD2, TDC, TDRC, ETD, TETD(4), and RETD.  For each task, every
method receives 24 candidate configurations evaluated on the same ten
validation seeds.  The selected configuration is frozen and then evaluated
on $10{,}000$ new integer seeds.  All methods share the same counter-based
random stream within a task and seed.  RETD is evaluated through its analytic
single-$c$ recovered prediction, so any remaining difference is not the raw
$c$-dependent fixed-point shift.

The terminal RMS errors depend on the feature representation.  On Boyan's
chain, TETD(4) is best at $.13884\pm.04487$, followed by ETD at
$.15301\pm.05294$ and RETD at $.19145\pm.03998$.  On the tabular random walk,
TDRC is best at $.01690\pm.00568$, whereas ETD and RETD attain
$.01797\pm.00504$ and $.03239\pm.01199$.  With inverted features, GTD2 is best
at $.01859\pm.00633$; ETD and RETD attain $.03069\pm.00695$ and
$.03515\pm.00578$.  With linearly dependent features, in contrast, RETD is
best at $.11633\pm.01162$, compared with $.12337\pm.01100$ for ETD and
$.12868\pm.01117$ for TETD(4).  No method diverges on any of these four tasks.

These tasks evaluate RETD outside the two instability examples.  After
hyperparameter selection, its prediction error is lower on the linearly
dependent random walk but higher on Boyan's chain and the other two random-walk
representations.  The finite-sample advantage therefore depends on the task
and feature representation; the results do not establish uniform dominance
over the baselines.
\FloatBarrier

\section{Related Work}
\label{sec:related}

We group related methods by the object they modify: the projected equation,
the trace or ratios, the update dynamics, or the stability criterion.
Table~\ref{tab:related-methods} summarizes the distinctions most relevant to
RETD.

\begin{table}[tb]
\centering
\footnotesize
\setlength{\tabcolsep}{2pt}
\renewcommand{\arraystretch}{.96}
\begin{tabularx}{\linewidth}{>{\raggedright\arraybackslash}p{.22\linewidth}>{\raggedright\arraybackslash}p{.27\linewidth}X}
\toprule
Family & Modified object & Relation to RETD \\
\midrule
GTD/GTD2/TDC/\\TDRC & Gradient correction with an auxiliary vector &
Targets an ordinary projected or MSPBE-related system; it is not an emphatic
shock-state modification and does not use the RETD recovery identity \\
Generalized, PER, learned ETD, and TETD & Emphasis or trace pathway &
Changes how emphasis is weighted, estimated, or truncated; RETD leaves $F_t$
and $\rho_t$ unchanged \\
V-trace and ratio clipping & Ratios or the effective operator &
Changes the sampled ratios or target operator, whereas RETD changes the joint
post-shock product \\
RETD & Scalar leaky post-shock state &
Affine-shifted system with exact ETD recovery, diminishing-stepsize convergence,
and conditional constant-stepsize path stability \\
\bottomrule
\end{tabularx}
\caption{Related off-policy methods grouped by the component they modify.}
\label{tab:related-methods}
\end{table}

\emph{The First Two Off-Policy Problems.}

Baird's counterexample established that ordinary linear off-policy TD can
diverge even with bounded features and zero reward
\citep{baird1995residual}.  Gradient-TD and TDRC stabilize projected or
gradient-correction systems \citep{sutton2008convergent,sutton2009fast,
ghiassian2020gradient}; RETD instead uses a scalar $\omega_t$ to store and
leak an emphatic TD shock.  It is therefore not a scalar GTD2/TDC variant: its
recovered equilibrium is the ETD solution, whereas those methods target an
ordinary MSPBE-related system.  Oblique-projection work further shows why this
recovery matters: a stable projected recursion can have a poor target-policy
approximation \citep{scherrer2010should,kolter2011fixed}, while emphatic
weighting changes that projection geometry
\citep{sutton2016emphatic,hallak2016generalized}.

\emph{Modifications of Emphatic Traces.}

The original ETD study identified high trace variance
\citep{sutton2016emphatic}; later work established diminishing-stepsize
convergence and constrained constant-step weak convergence
\citep{yu2015convergence,yu2016weak}.  These results concern different
stability objects from an unconstrained homogeneous constant-stepsize product.

PER-ETD, TETD, learned or deep emphatic methods, LC-ETD, emphatic
actor--critic methods, and V-trace respectively estimate, truncate, relax, or
replace parts of the emphasis or ratio pathway
\citep{guan2022per,zhang2022truncated,jiang2021emphatic,jiang2022learning,
he2023loosely,graves2023offpolicy,espeholt2018impala}.

RETD occupies a different point: it leaves
$F_{t+1}=1+\gamma\rho_tF_t$ and $\rho_t$ unchanged, hence does not change the
trace tail index.  The same shock instead enters $\theta_t$ and $\omega_t$;
later zero-ratio transitions use the leaked scalar for recovery.  This is the
algebraic distinction in Equation~\eqref{eq:etd-retd-matrices-new} and the
dynamical distinction in the exact two-state cycle products.

\emph{Random Products and Lyapunov Stability.}

Top Lyapunov exponents and regenerative drift arguments are standard tools for
random products and Markov chains
\citep{furstenberg1960products,kingman1968ergodic,oseledets1968multiplicative,
bougerol1985products,meyn2009markov}.

The external small-stepsize random-product result used in
Theorem~\ref{thm:retd-fixed-new} is due to \citet{durmus2021stability}.  RETD
requires an algorithm-specific verification: its sample matrix contains the
rank-one $c$-term in Equation~\eqref{eq:sample-c-rank-one-new}, the explicit
condition on $c$ makes the mean matrix positive stable, and the resulting
Lyapunov matrix and growth constant determine an admissible stepsize.  Recent
TD analyses also use exponential stability
\citep{samsonov2024exponential,lee2026single}, but they do not verify the
augmented trace-driven RETD matrices or compare their post-shock dynamics with
ETD.  Satisfying the RETD condition therefore proves no ETD result.

The Baird certificate in Section~\ref{sec:baird} combines these tools with an
environment-specific projective drift computation and separates fitted
quantities from rigorous interval verification \citep{rump2010verification}.
The RETD point has a projective certificate; the positive Baird ETD exponent is
only a finite-horizon numerical estimate.  RETD also differs from optimization
analyses of multiplicative noise and heavy tails
\citep{simsekli2019tail,gurbuzbalaban2021heavy,hodgkinson2021multiplicative,
chemnitz2025stability}: its matrices are trace-driven and prediction-relevant
exponents require removal of feature-null directions.

\emph{Relation to Bellman-Error Centering.}

Bellman-error centering stores a shared TD-error component
\citep{chen2026bellman}, but RETD drives its scalar state by
$F_t\rho_t\delta_t$, couples it to an emphatically weighted update, and targets
a recoverable ETD fixed point.  Its leaky impulse response, $c=0$ conserved-mode
obstruction, two-state sign separation, and Baird certificate are therefore
distinct from centered TD's objectives and stability mechanism.

\section{Conclusion and Future Work}
\label{sec:conclusion}

This paper separates mean stability, diminishing-stepsize convergence, and
constant-stepsize sample-path stability in off-policy emphatic prediction.  A
two-state counterexample proves that ETD can have a strictly
contractive mean map and a positive top Lyapunov exponent at the same
stepsize.  RETD adds one leaky scalar state, reverses the product sign on that
counterexample, and preserves the ETD solution through an exact affine
readout.  The theory establishes almost-sure convergence with a diminishing
stepsize and conditional moment contraction with a sufficiently small constant
stepsize; the Baird study and held-out experiments show both the resulting
stability gains and their task dependence.

Future work will extend the analysis to RETD($\lambda$).  A multi-step
extension must treat
the eligibility trace, follow-on trace, parameter vector, and recovery state as
one coupled stochastic system; derive the corresponding mean matrix and
fixed-point recovery identity; and establish diminishing- and constant-
stepsize results without assuming the random-product contraction that the
algorithm is intended to prove.

\clearpage

\acks{This study was supported in part by the National Natural Science
Foundation of China (Nos. 62276142, 62206133, 62202240, and 62506172),
the Fundamental Research Funds for the Central Universities
(No. 2242025K30024), the Basic Research Program of Jiangsu
(No. BK20250658), and the Natural Science Research Start-up Foundation
of Recruiting Talents of Nanjing University of Posts and Telecommunications
(No. NY225026).

\emph{Competing Interests.} The authors declare that they have no competing
interests relevant to this work.}

\bibliography{references}

\end{document}